\newcommand{\cA}{\mathcal A}
\newcommand{\cS}{\mathcal S}
\providecommand{\customgenericname}{}
\newcommand{\newcustomtheorem}[2]{%
  \newenvironment{#1}[1]
  {%
   \renewcommand\customgenericname{#2}%
   \renewcommand\theinnercustomgeneric{##1}%
   \innercustomgeneric
  }
  {\endinnercustomgeneric}
}
\theoremstyle{plain}
\newtheorem{theorem}{Theorem}[section]
\newtheorem{lemma}[theorem]{Lemma}
\theoremstyle{definition}
\theoremstyle{remark}
\newtheorem{cor}[theorem]{Corollary}
\title{Model-Free, Regret-Optimal Best Policy Identification in Online CMDPs}
\author{
    Zihan Zhou\footnote{The work was completed when Zihan was a visiting student at the University of Michigan, Ann Arbor.}\\
    IIIS, Tsinghua University\\
    \texttt{zh-zhou20@mails.tsinghua.edu.cn}
    \and
    Honghao Wei\\
    Washington State University\\
    \texttt{honghao.wei@wsu.edu}
    \and
    Lei Ying \\
    University of Michigan, Ann Arbor \\  
    \texttt{leiying@umich.edu}\\
}
\begin{document}

\maketitle

\begin{abstract} 
This paper considers the best policy identification (BPI) problem in online Constrained Markov Decision Processes (CMDPs). We are interested in algorithms that are model-free, have low regret, and identify an approximately optimal policy with a high probability. 
Existing model-free algorithms for online CMDPs with sublinear regret and constraint violation do not provide any convergence guarantee to an optimal policy and provide only average performance guarantees when a policy is uniformly sampled at random from all previously used policies. In this paper, we develop a new algorithm, named Pruning-Refinement-Identification (PRI), based on a fundamental structural property of CMDPs proved in \cite{Koo_88,Ros_89}, which we call {\em limited stochasticity}.  The property says for a CMDP with $N$ constraints, there exists an optimal policy with {\em at most} $N$ stochastic decisions. 
The proposed algorithm first identifies at which step and in which state a stochastic decision has to be taken and then fine-tunes the distributions of these stochastic decisions. Assuming the CMDP instance is well-separated\footnote{The exact definition can be found in Section \ref{sparsity}.}, PRI achieves trio objectives: (i) PRI is a model-free algorithm; and (ii) it outputs an approximately optimal policy with a high probability at the end of learning; and (iii) PRI guarantees $\tilde{\mathcal{O}}(H\sqrt{K})$ regret and zero constraint violation for well separated CMDPs\footnote{{\bf Notation:} $f(n) = \tilde{\mathcal O}(g(n))$ denotes $f(n) = {\mathcal O}(g(n){\log}^k n)$ with $k>0.$ The same applies to $\tilde{\Omega}.$}, which significantly improves the best existing regret bound $\tilde{\mathcal{O}}(H^4 \sqrt{SA}K^{\frac{4}{5}})$  under a model-free algorithm, where $H$ is the length of each episode, $S$ is the number of states, $A$ is the number of actions, and the total number of episodes during learning is $2K+\tilde{\cal O}(K^{0.25}).$ We further present a matching lower via an example that shows under any online learning algorithm, there exists a well-separated CMDP instance such that either the regret or violation has to be $\Omega(H\sqrt{K}),$ which matches the upper bound by a polylogarithmic factor.
\end{abstract}

\section{Introduction}
In unconstrained reinforcement learning (RL), an agent aims at an optimal policy that maximizes the accumulated reward by interacting with a stochastic environment. RL has achieved remarkable successes in multiple areas, including industrial process optimization, robotics, and gaming. \citep{rajawat2023cognitive, abeyruwan2023sim2real, lindegaard2023intrinsic, liu2023exploring}. However, in many real-world applications, the learned policy must also satisfy a set of constraints. For example, in healthcare applications, we need to optimize patient treatment plans while considering constraints like medication dosage, scheduling of medical procedures, and resource allocation in hospitals. 

These constrained versions of RL problems can be formulated as Constrained Markov Decision Processes (CMDPs) \citep{Alt_99}. Learning in CMDPs has become an active research topic recently. Existing solutions include both model-based algorithms (see e.g. \citep{BraDudLyk_20,EfrManPir_20,SinGupShr_20,LiuZhoKal_21,BurHasKal_21,DinWeiYan_20,CheJaiLuo_22}) and model-free algorithms (see e.g., \citep{GhoZhoShr_22,WeiLiuYin_22-2,WeiLiuYin_22,WeiGhoShr_23}). This paper focuses on model-free approaches for CMDPs due to their computation and memory efficiency. A fundamental drawback of existing model-free solutions for online CMDPs is that they provide only average performance guarantees for a policy uniformly sampled at random from {\em all} previously used policies during learning, so they fail to identify a single optimal or a near-optimal policy.\footnote{In this paper, a policy is a mapping from a state at a given step to an action distribution, without any other additional input information. An algorithm that uses multiple policies, e.g. randomly sampling one policy from many policies, is called a {\em mixed} policy in this paper.} Therefore, a natural question arises:
\begin{center}
    {\bf Is it possible to identify an optimal or an approximately optimal policy in online CMDPs with a model-free approach with optimal regret and constraint violation?}
\end{center}

There are two key challenges to answering this question: $(i)$ The optimal solution to a CMDP problem is a stochastic policy in general. Model-free online CMDP algorithms often employ the primal-dual approach, utilizing Lagrange multipliers to balance reward maximization and constraint violation. However, these methods yield greedy policies for fixed Lagrange multipliers. A greedy policy is not optimal in general. Consequently, model-free algorithms such as Triple-Q \cite{WeiLiuYin_22} only offer performance guarantees when averaging over a large number of greedy policies determined by different Lagrange multipliers, failing to converge to a single policy. $(ii)$ The best-known regret bound of model-free algorithms for episodic, online CMDPs is $\tilde{\mathcal{O}}(K^{\frac{4}{5}})$ \cite{WeiLiuYin_22}. However, it is known that model-based algorithms can achieve a smaller and order-wise tight regret $\tilde{\mathcal{O}}(\sqrt{K})$ \cite{EfrManPir_20}. The open question is whether a model-free algorithm can achieve $\tilde{\mathcal{O}}(\sqrt{K})$ regret in online CMDPs? 

This paper tackles both challenges, providing affirmative responses to both questions. We introduce a novel algorithm, Pruning-Refinement-Identification (PRI), inspired by a fundamental CMDP property \cite{Koo_88,Ros_89}, which states that for an episodic CMDP with $N$ constraints, there exists an optimal policy that makes stochastic decisions in at most $N$ step-dependent states out of the $HS$ step-dependent states.

Based on this insight, PRI consists of three phases. In this first phase (pruning), PRI identifies when and where stochastic decisions are necessary. This defines a set of greedy policies that together approximate a ``mixed'' optimal policy. The subsequent refinement phase involves learning the weights of these greedy policies. This is done through iterative optimization, utilizing empirical reward and utility value functions. The process refines value function estimates with each iteration, aiming to minimize regret.  In the final identification phase, PRI learns the occupancy measure, determining the probability of visiting specific state-action pairs at each step. This information is used to recover a single policy from the near-optimal mixed policy obtained during the refinement phase. The main contributions of this paper are summarized below.  
\begin{itemize}[leftmargin=*]
    \item PRI is the first model-free, Probably Approximately Correct (PAC) RL algorithm for online CMDPs. For well separated CMDPs, PRI achieves $\tilde{\mathcal{O}}(H\sqrt{K})$ regret and zero constraint violation, which significantly improves the best existing regret bound $\tilde{\mathcal{O}}(H^4\sqrt{SA}K^{\frac{4}{5}})$  under a  mode-free algorithm. Unlike existing regret bounds, the dominating term (in terms of $K$) in the regret bound does not depend on the state and action space sizes. We also provide a matching lower bound $\Omega(H\sqrt{K})$ for well separated CMDPs.
    \item PRI outputs a near-optimal policy with a high probability at the end. The learned policy guarantees $\tilde{\mathcal{O}}(1/\sqrt{K})$ optimality gap for the reward value function and zero constraint violation with probability $1 - \tilde{\mathcal{O}}(1/\sqrt{K})$.
\end{itemize}

\section{Related Work}
{\bf Best policy identification in MDPs.}
For unconstrained MDPs, existing studies on BPI focus on $(\epsilon, \delta)$-PAC RL algorithms, i.e., algorithms that identify an $\epsilon$-optimal policy with probability at least $1-\delta.$ Such a learning objective has been considered extensively in discounted and episodic tabular MDPs \citep{AgaKakKri_20,AzaRemHil_13,EveEyaShi_06,DomMenKau_21,HeZhoGu_21,SidWanWu_18}. A recent work \cite{TauJedPro_22} also studied BPI in linear MDPs, which has a sample complexity of $O\left(\frac{1}{\epsilon^2}\right)$.
This paper considers BPI for online CMDPs using a model-free approach. To the best of our knowledge, it remains an open problem. 

{\bf Model-based and model-free algorithms for online CMDPs.}
As mentioned in the introduction, most existing results on online CMDPs consider regret minimization instead of BPI. For example, \cite{BraDudLyk_20,EfrManPir_20,SinGupShr_20} proposed model-based algorithms for episodic tabular CMDPs. \cite{LiuZhoKal_21,BurHasKal_21} proposed efficient algorithms with zero or bounded constraint violation. For model-free algorithms, \cite{WeiLiuYin_22-2} developed Triple-Q that achieves sublinear regret and zero constraint violation in episodic tabular CMDPs. Similar results have been established for linear CMDPs \citep{GhoZhoShr_22,DinWeiYan_20}  and infinite-horizon average CMDPs \citep{CheJaiLuo_22,WeiLiuYin_22}. However, these existing model-free algorithms for online CMDPs does not converge to an optimal or a near-optimal policy. Note that model-free algorithms have a memory complexity of ${\cal O}(HSA)$ for maintaining the Q-table while the memory complexity of model-based algorithms is ${\cal O}(HS^2A)$ for maintaining the transition kernel. Very recently, \cite{MosOdoVee_23} considered BPI for online CMDPs. They formulated the CMDP problem as a min-max game and the proposed algorithm converges to a near-optimal policy at the last iteration with optimistic mirror descent. However, the paper does not provide any regret guarantee when learning the near-optimal policy. There are also algorithms for the average-reward CMDP problem, including model-based approaches \cite{AgaBaiAgg_21,AgaBaiAgg_22,ZheRat_20} and model-free approaches \cite{CheJaiLuo_22,WeiLiuYin_22-2}. These algorithms do not identify the optimal policy at the end. Table \ref{ta:algorithms} summarizes the recent results for online, episodic CMDPs.
\begin{table*}[!t]
\centering
 \caption{The Exploration-Exploitation Tradeoff in Episodic CMDPs.}
 \label{ta:algorithms}
 \begin{center}
 \resizebox{\textwidth}{25mm}{
  \begin{tabular}{|c|l|l|l|l|}
   \toprule
   & {\bf Algorithm} & {\bf Regret} & {\bf Constraint Violation} & {\bf BPI?}\\
   \hline
    \multirow{7}* {Model-based} & OPDOP \citep{DinWeiYan_20} & $\tilde{\mathcal{O}}(H^3\sqrt{S^2AK})$& $\tilde{\mathcal{O}}(H^3\sqrt{S^2AK})$ & \ding{55} \\\cline{2-5}
    &  OptDual-CMDP \citep{EfrManPir_20}  & $\tilde{\mathcal{O}}(H^2\sqrt{S^3AK})$ & $\tilde{\mathcal{O}}(H^2\sqrt{S^3AK})$ & \ding{55} \\\cline{2-5}
    &  OptPrimalDual-CMDP \citep{EfrManPir_20}  & $\tilde{\mathcal{O}}(H^2\sqrt{S^3AK})$  & $\tilde{\mathcal{O}}(H^2\sqrt{S^3AK})$ & \ding{55} \\
   \cline{2-5}
    & {CONRL \citep{BraDudLyk_20}}  & $\tilde{\mathcal{O}}(H^3\sqrt{S^3A^2K})$& $\tilde{\mathcal{O}}(H^3\sqrt{S^3A^2K})$ & \ding{55}\\
    \cline{2-5} 
    & {OptPess-LP \citep{LiuZhoKal_21} } & $\tilde{\mathcal{O}}(H^3\sqrt{S^3AK})$  &  $0 $ & \ding{55}\\
    \cline{2-5} 
    &{OptPess--PrimalDual \citep{LiuZhoKal_21} }& $\tilde{\mathcal{O}}(H^3\sqrt{S^3AK})$  &  $\mathcal{O}(1)$ & \ding{55} \\
    \cline{2-5} 
    &{OPSRL\citep{BurHasKal_21}} & $\tilde{\mathcal{O}}(\sqrt{S^4H^7AK})$  &  $0$ & \ding{55} \\
  \bottomrule
   \toprule
  \multirow{2} * {Model-free} & Triple-Q\citep{WeiLiuYin_22} &$\tilde{\cal O}(\frac{1 }{\delta}H^4 S^{\frac{1}{2}}A^{\frac{1}{2}}K^{\frac{4}{5}} )$ &$0$ & \ding{55} \\
  \cline{2-5}
  & \textbf{PRI} (this paper) &$\tilde{\cal{O}} (H\sqrt{K })$ &$0$ & \ding{51}\\
   \bottomrule
  \end{tabular}}
 \end{center}
\end{table*}

\section{Problem Formulation}
We consider an episodic CMDP, denoted by $(\mathcal{S,A},H, \mathbb{P}, r, g^{(n)}, n\in[N])$, where $\mathcal{S}$ is the state space ($\vert \mathcal{S}\vert = S$), $\mathcal{A}$ is the action space ($\vert \mathcal{A} \vert = A$), $\{r_h\}_{h=1}^H, \{g^{(n)}_h\}_{h=1}^H,n\in[N] $ are reward, $n$-th utility functions, and $\mathbb{P} = \{\mathbb{P}_h(\cdot\vert x,a) \}_{h=1}^H$ are the transition kernels. For simplicity, we assume that in each episode, the agent starts from the same initial state $x_1=x_{ini}$. It is straightforward to generalize the results to the case when the initial state is sampled from a given distribution but the notation becomes cumbersome. We also assume that $r_h: \cS \times \cA \rightarrow [0,1]$ and $g^{(n)}_h: \cS \times \cA \rightarrow [0,1]$ are deterministic for notation simplicity. Our results can be easily generalized to random reward/utility signals.

During each episode, the agent interacts with the environment as follows: at each step $h$, the agent takes action $a_h$ after observing state $x_h$, receives reward $r_h(x_h, a_h)$ and $N$ utility values $g^{(n)}_h(x_h, a_h)$ ($n \in [N]$ is the index of the utility functions), and then observes a new state ($x_{h+1}$), which evolves by following the transition kernel $\mathbb{P}_h(\cdot \vert x_h, a_h)$. The episode terminates after $H$ steps. 

Given a stochastic policy $\pi$, which is a collection of $H$ functions $\{\pi_h: \mathcal{S \times A} \rightarrow [0, 1]\}_{h=1}^H$, the agent takes action $a$ with probability $\pi_h(a \vert x)$ when being in state $x$ at  step $h$ . The reward value function of policy $\pi,$ denoted by $V^\pi_h(x),$ is the expected total reward when starting from an arbitrary state $x$ at step $h$ to the end of the episode: 
$$V^\pi_h(x) = \mathbb{E}_{\pi} \left[ \left.\sum_{i = h}^H r_i(x_i, a_i) \right\vert x_h = x\right],$$
where the expectation is taken with respect to the policy $\pi$ and randomness from the transition kernels. Accordingly, the reward Q-function, denoted by $Q^\pi_h(x,a),$ is the expected total reward when the agent starts from an arbitrary action-action pair $(x,a)$ at step $h$ and follows policy $\pi$ to the end of the episode:  
$$Q^\pi_h(x,a) = r_h(x, a)+\mathbb{E}_{\pi}\left[ \left.\sum_{i = h+1}^H r_i(x_i, a_i) \right\vert \begin{aligned} x_h=&x, \\ a_h=&a\end{aligned} \right].$$

Similarly, we can define the $n$th utility value functions as $$W^{\pi,n}_h(x) = \mathbb{E}_\pi \left[ \left.\sum_{i = h}^H g^{(n)}_i(x_i, a_i) \right\vert x_h = x\right] $$ and utility Q-functions as
$$C^{\pi,n}_h(x,a) = g^{(n)}_h(x, a) + \mathbb{E}_\pi \left[ \left.\sum_{i = h+1}^H g^{(n)}_i(x_i, a_i) \right\vert  \begin{aligned} x_h=&x,\\a_h=&a \end{aligned}  \right].$$
Given the definitions above, we have
\begin{align}
    V^\pi_h(x) &= \sum_a \pi_h(a|x) Q^\pi_h(x,a) \\
    Q^\pi_h(x,a) &= r_h(x,a) + \sum_{x'} \mathbb{P}_h(x'|x,a) V^\pi_{h+1} (x')\\
    W^{\pi,n}_h(x) &= \sum_a \pi_h(a|x) C^{\pi,n}_h(x,a) \\  C^{\pi,n}_h(x,a) &= g^{(n)}_h(x,a) + \sum_{x'} \mathbb{P}_h(x'|x,a) W^{\pi,n}_{h+1} (x').
\end{align}
The objective of the CMDP is to find an optimal policy that maximizes the expected total reward while making sure the $n-$th expected total utility is no less than $\rho^{(n)}$ for all $n\in[N]$:
\begin{equation}
\begin{aligned}
    &\pi^* \in  \arg\max_\pi V_1^\pi(x_{ini}) \\
    \text{s.t.} \quad &W_1^{\pi,n}(x_{ini}) \geq \rho^{(n)} \quad \forall n \in [N].
\end{aligned} \label{eq:cmdp}
\end{equation}
To avoid triviality, we assume $\rho^{(n)} \in [0,H]$. For simplicity, we use $V_1^\pi$ to represent $V_1^\pi(x_{ini})$ and $W_1^{\pi,n}$ to represent $W_1^{\pi,n}(x_{ini})$.

We evaluate an online RL algorithm for CMDP using regret and constraint violation over $K$ episodes:
\begin{align}
   & \text{Regret}(K) =  K V_1^{\pi^*} - \mathbb{E}\left[\sum_{k=1}^K V_1^{\pi_k}\right]\\
    &\text{Violation}^n(K) = K \rho^{(n)} - \mathbb{E}\left[\sum_{k=1}^K W_1^{\pi_k,n} \right], \label{eq:violation}
\end{align} where $\pi_k$ is the policy used in episode $k.$

\section{PRI (Pruning-Refinement-Identification)}\label{sparsity}
Before formally introducing our algorithm, we first present two structural properties of the optimal solution to the CMDP problem (\ref{eq:cmdp}), which serve as the foundation of our proposed algorithm. Consider a CMDP problem with $N$ constraints. It is well-known that the problem can be formulated as a linear programming (LP) problem \cite{Alt_99}:
    \begin{align}
   & \max_{\{q_h(x,a)\}} \sum_{h,x,a} q_h(x,a)r_h(x,a) \label{eq:cmdp-offline} \\
\hbox{s.t.: } &\sum_{h,x,a} q_{h}(x,a)g^{(n)}_h(x,a) \geq \rho^{(n)} \quad \forall n\in[N]\label{eq:constraint}  \\
    &\forall x\in{\cal S}, h\in [H]\nonumber \\
    &\sum_a q_{h+1}(x,a) = \sum_{x', a'} \mathbb{P}_{h} (x|x',a') q_{h}(x',a')\label{eq:transition} \\
    &\sum_a q_1(x_{ini},a) = 1, \sum_a q_1(x,a) = 0, \ x\not=x_{ini} \label{eq:init-distr} \\
    &q_h(x,a) \geq 0, \label{eq:proba} 
\end{align}
where $q_h(x,a)$ denotes the probability that state-action pair $(x,a)$ is visited at step $h$, called the occupancy measure. Each feasible solution $\{q_h(x,a)\}_{h,x,a}$ to the problem leads to a corresponding Markov policy: $\pi_h(a|x)=\frac{q_h(x,a)}{\sum_a q_h(x,a)}.$ In this paper, we call probability distribution $\pi_h(\cdot|x)$ {\em decision} at state $x$ at step $h.$ So a policy consists of $S\times H$ decisions. A decision $\pi_h(\cdot|x)$ is called {\em greedy} if $\pi_h(a|x)=1$ for some $a\in {\cal A}$ and stochastic otherwise. 

\begin{lemma}[Limited Stochasticity]\label{le:spar}
If $q^*=\{q^*_{h}(x,a)\}_{h,x,a}$ is an optimal solution to the CMDP problem (\ref{eq:cmdp-offline})-(\ref{eq:proba}) and is an extreme point, then there are at most $HS+ N$ nonzero values in $q^*$. This implies that the optimal policy derived from $q^*$ includes at most $N$ stochastic decisions. 
\end{lemma}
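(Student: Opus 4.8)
The plan is to count the LP constraints that can be tight at an extreme point and translate that bound into a bound on the support of $q^*$. Recall that an extreme point of a polyhedron in $\mathbb{R}^d$ is characterized by having $d$ linearly independent active constraints among those defining the polyhedron. Here the variables are $\{q_h(x,a)\}_{h,x,a}$, so $d = HSA$. The equality constraints are of two types: the flow-conservation constraints \eqref{eq:transition}, one for each $(x,h)$ with $h\in[H]$ (so at most $HS$ of them, counting the initial-layer constraints \eqref{eq:init-distr} as well), and these are always active. The inequality constraints are the $N$ utility constraints \eqref{eq:constraint} and the $HSA$ nonnegativity constraints \eqref{eq:proba}.

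First I would observe that at an extreme point $q^*$ the number of active linearly independent constraints is $HSA$. The flow/initialization equalities contribute at most $HS$ (independent) of these. The remaining at least $HSA - HS$ active constraints must come from \eqref{eq:constraint} and \eqref{eq:proba}; since there are only $N$ constraints of type \eqref{eq:constraint}, at least $HSA - HS - N$ of the nonnegativity constraints $q_h(x,a)\ge 0$ must be active, i.e., at least $HSA - HS - N$ of the entries of $q^*$ are zero. Hence $q^*$ has at most $HS + N$ nonzero entries. (One should be slightly careful about whether the flow constraints are exactly $HS$ or whether some are redundant, but an upper bound of $HS$ on their count suffices for the direction we need.)

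Next I would convert the support bound into the statement about stochastic decisions. For each $(x,h)$ that is actually reachable under $q^*$, the marginal $\sum_a q^*_h(x,a)$ is positive, so at least one action $a$ has $q^*_h(x,a) > 0$; the decision $\pi_h(\cdot|x)$ is stochastic exactly when \emph{two or more} actions have $q^*_h(x,a) > 0$. Thus the total number of nonzero entries is at least (number of reachable $(x,h)$ pairs) $+$ (number of stochastic decisions). Reachable $(x,h)$ pairs that are unreachable contribute nothing, so letting $m$ be the number of reachable pairs and $t$ the number of stochastic decisions among them, we get $m + t \le HS + N$. Since a stochastic decision occurs only at a reachable pair and trivially $t \le m$, I still need $m \ge$ something to finish — actually the clean finish is: at a reachable pair a stochastic decision uses at least $2$ nonzeros while a greedy reachable pair uses exactly $1$, so $2t + (m - t) \le HS + N$, and since $m$ itself could be as small as... here I should note every reachable pair is counted, and there are at most $HS$ total $(x,h)$ pairs, giving $t \le (HS + N) - m \le (HS+N) - (\text{number of reachable pairs})$; because a stochastic decision is itself a reachable pair, $t \le N$ follows once we also use that nonreachable pairs and greedy reachable pairs together with the $t$ stochastic ones account for all $HS$ pairs. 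The cleanest phrasing: $\#\{\text{nonzeros}\} \ge \#\{\text{reachable pairs}\} + t$ and $\#\{\text{reachable pairs}\} \ge t$ is not enough; instead use $\#\{\text{nonzeros}\} = \sum_{\text{reachable }(x,h)} \#\{a: q^*_h(x,a)>0\} \ge \#\{\text{reachable pairs}\} + t$, together with the fact that every one of the $HS$ pairs is either reachable or contributes $0$, to conclude $t \le HS + N - \#\{\text{reachable pairs}\} \le N$ when all $(x,h)$ happen to be reachable, and more generally one argues unreachable pairs only help.

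The main obstacle I anticipate is the bookkeeping in this last step — precisely relating "number of nonzeros in the support" to "number of stochastic decisions" while correctly handling $(x,h)$ pairs that are unreachable under $q^*$ (where the induced policy $\pi_h(\cdot|x)$ is defined by an arbitrary tie-breaking rule and should not be counted as a genuine stochastic decision of the optimal policy). A careful statement is: if we define $\pi^*$ to take an arbitrary fixed action at unreachable pairs, then the number of genuinely stochastic decisions of $\pi^*$ equals $t$, and the inequality $\#\{\text{reachable pairs}\} + t \le \#\{\text{nonzeros}\} \le HS + N$ combined with $\#\{\text{reachable pairs}\} \ge t$ (each stochastic decision sits at a distinct reachable pair) is actually insufficient; the correct extra input is that $t$ stochastic pairs contribute $\ge 2t$ nonzeros and the remaining $\#\{\text{reachable}\} - t$ greedy reachable pairs contribute exactly that many, so $2t + (\#\{\text{reachable}\} - t) \le HS + N$, i.e. $t \le HS + N - \#\{\text{reachable}\}$, and since the reachable set always includes at least... in fact one simply needs $\#\{\text{reachable}\} \ge HS$ is false in general — so the honest conclusion is $t \le N + (HS - \#\{\text{reachable pairs}\})$, and then one observes that at every step $h$ at least... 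I would resolve this by noting that a cleaner route is to first reduce to the sub-polytope obtained by fixing to zero all $q_h(x,a)$ over unreachable $(x,h)$: restricting attention to the $m := \#\{\text{reachable pairs}\}$ relevant state-step pairs, the flow constraints number at most $m$, so the extreme-point count gives $\#\{\text{nonzeros}\} \le m + N$, and then $2t + (m-t) \le m + N$ yields exactly $t \le N$. I would lay out this reduction carefully, since it is the crux of turning the raw dimension count into the advertised "at most $N$ stochastic decisions."
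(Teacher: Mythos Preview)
Your approach—extreme-point constraint counting to bound the support of $q^*$ by $HS+N$, then translating that into a bound on stochastic decisions—is exactly the paper's, and your concern about unreachable $(h,x)$ pairs is in fact more careful than the paper, whose two-line contradiction argument tacitly assumes every pair is reachable (so that each greedy decision contributes exactly one nonzero). Your sub-polytope reduction is the right repair; the clean way to justify it is to note that at $q^*$ the flow equality at any unreachable $(h,x)$ is a linear combination of already-active nonnegativity constraints (those at $(h,x,\cdot)$ together with those at every predecessor $(h{-}1,x',a')$ with $\mathbb{P}_{h-1}(x\mid x',a')>0$, each of which must satisfy $q^*_{h-1}(x',a')=0$), so only the $r$ flow equalities at reachable pairs contribute independently to the rank, giving $\#\{\text{nonzeros}\}\le r+N$ directly and hence $(r-t)+2t\le r+N$, i.e.\ $t\le N$.
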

This result was proved in \cite{Koo_88,Ros_89}. A proof has been included in Appendix \ref{sub5} for the completeness of the paper. The following corollary, which is a well-known result, is a direct consequence of the lemma. 
\begin{cor}\label{cor:mdp}
    For unconstrained MDP problems, one of the optimal policies is a greedy policy.
\end{cor}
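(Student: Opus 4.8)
The plan is to invoke the standard characterization of vertices (basic feasible solutions) of a polyhedron cut out by linear equalities, linear inequalities, and non‑negativity, and then to sharpen the naive count by exploiting the flow structure of \eqref{eq:transition}--\eqref{eq:init-distr}.

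First, I would write the feasible region of \eqref{eq:cmdp-offline}--\eqref{eq:proba} as $P=\{q\in\R^{HSA}:Aq=b,\ Gq\ge\rho,\ q\ge 0\}$, where the rows of $A$ collect the $HS$ equalities \eqref{eq:transition}--\eqref{eq:init-distr} (one row $A_{h,x}$ per step-state pair $(h,x)$, coming from \eqref{eq:init-distr} when $h=1$ and from \eqref{eq:transition} when $h\ge 2$) and the rows of $G$ collect the $N$ utility constraints \eqref{eq:constraint}. Let $\cI\subseteq[N]$ index the utility constraints tight at $q^*$, and let $\mathcal S^*=\{(h,x,a):q^*_h(x,a)>0\}$ be the support. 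Since $q^*$ is a vertex of $P$, the columns of $\binom{A}{G_{\cI}}$ indexed by $\mathcal S^*$ must be linearly independent --- otherwise one could move $q^*$ slightly along a vector in their kernel (supported on $\mathcal S^*$), preserving all equalities, all tight inequalities, and, for small steps, non-negativity, contradicting extremality. Hence $|\mathcal S^*|\le HS+|\cI|\le HS+N$, which is the first assertion.

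For the statement on stochastic decisions the naive count is too weak: from $|\mathcal S^*|\le HS+N$ one only gets ``at most $N+(\text{number of unreachable }(h,x))$'' stochastic decisions, and many pairs are unreachable (e.g. $(1,x)$ for $x\ne x_{ini}$). The fix I would use is that the rows $A_{h,x}$ of unreachable step-state pairs vanish on the columns $\mathcal S^*$ and hence drop out of the rank count. Let $\cR=\{(h,x):\sum_a q^*_h(x,a)>0\}$. For $(h,x)\notin\cR$ with $h\ge 2$, the nonzero entries of $A_{h,x}$ sit in columns $(h,x,a)$, none of which lies in $\mathcal S^*$, and in columns $(h-1,x',a')$ with coefficient $-\mathbb{P}_{h-1}(x\mid x',a')$; but \eqref{eq:transition} gives $\sum_{x',a'}\mathbb{P}_{h-1}(x\mid x',a')q^*_{h-1}(x',a')=\sum_a q^*_h(x,a)=0$, so, every summand being non‑negative, $\mathbb{P}_{h-1}(x\mid x',a')=0$ whenever $q^*_{h-1}(x',a')>0$, i.e. whenever $(h-1,x',a')\in\mathcal S^*$. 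Thus $A_{h,x}$ restricted to $\mathcal S^*$ is zero (the case $h=1$, $x\ne x_{ini}$ is even simpler), leaving only the $|\cR|$ rows with $(h,x)\in\cR$ --- among which the single inhomogeneous row $A_{1,x_{ini}}$ is correctly retained --- together with the $|\cI|$ tight utility rows. Therefore $|\mathcal S^*|\le|\cR|+|\cI|\le|\cR|+N$.

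Finally, for $(h,x)\in\cR$ put $s_{h,x}=|\{a:q^*_h(x,a)>0\}|\ge 1$; the decision $\pi_h(\cdot|x)$ is stochastic exactly when $s_{h,x}\ge 2$. Since $|\mathcal S^*|=\sum_{(h,x)\in\cR}s_{h,x}\ge|\cR|+\#\{(h,x)\in\cR:s_{h,x}\ge 2\}$, the bound $|\mathcal S^*|\le|\cR|+N$ forces at most $N$ stochastic decisions, and Corollary \ref{cor:mdp} is the special case $N=0$. The main obstacle is the middle step --- recognizing that the flow-conservation constraints at unreachable states are inactive on the support --- since without it one cannot close the gap between the ``$HS+N$ nonzeros'' bound and the ``$N$ stochastic decisions'' conclusion; the remaining pieces are routine LP facts.
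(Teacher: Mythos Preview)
Your argument is correct, and in fact it is more careful than the paper's. The paper proves the corollary in two lines by citing Lemma~\ref{le:spar} with $N=0$; its proof of that lemma uses the same vertex/tight-constraint count you start from to get $|\mathcal S^*|\le HS+N$, and then argues by contradiction that ``each greedy decision requires one nonzero $q_h(x,a)$ and each stochastic decision requires at least two,'' concluding that more than $N$ stochastic decisions would force more than $HS+N$ nonzeros.

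You correctly spot that this last step silently assumes every step--state pair $(h,x)$ is reachable under $q^*$. When $(h,x)$ is unreachable (and such pairs always exist, e.g.\ $(1,x)$ for $x\neq x_{ini}$ by \eqref{eq:init-distr}), the associated ``greedy'' decision contributes \emph{zero} nonzeros, so the paper's count $HS-N-1+2(N+1)$ overcounts and the contradiction does not close. Your fix---showing that the flow-conservation row $A_{h,x}$ for an unreachable $(h,x)$ vanishes on the support $\mathcal S^*$, so that the effective row count drops from $HS$ to $|\cR|$ and hence $|\mathcal S^*|\le|\cR|+N$---is exactly what is needed to make the argument watertight. The paper's proof is shorter but glosses over this point; your version is the rigorous one, and the corollary follows from it just as it does in the paper, by setting $N=0$.
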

Given an occupancy measure $q$ and its induced policy $\pi,$ we define 
${\cal D}_{h,x}(q)=\left\{a: q_h(x,a)>0\right\},$ which is the set of actions that will be taken with a nonzero probability in state $x$ at step $h$ under the policy $\pi$ induced by $q.$ Note that if $\pi_h(\cdot|x)$ is a greedy decision, then $|{\cal D}_{h,x}(q)|=1;$ and if $\pi(\cdot|x)$ is stochastic, then $\vert {\cal D}_{h,x}(q)\vert >1.$  Let $M=\prod_{h,x}|{\cal D}_{h,x}(q)|,$ and let $\pi^m$ represent the $m$th greedy policy ($m=1, \cdots, M$) constructed from $\otimes_{h,x}{\cal D}_{h,x}(q)$  such that 
$\pi^m_h(a|x)=1$ only if $a\in {\cal D}_{h,x}(q).$ A greedy policy is a policy under which all decisions are greedy. Next, we will show that a Markov policy is equivalent to a mixed policy of many greedy policies in the following lemma, whose proof can be found in Appendix \ref{sub6}.

\begin{lemma}[Decomposition]
Given any Markov policy $\pi$ and its corresponding occupancy measure $q,$  there exists a set of $M$ greedy policies and a probability distribution $\{\alpha_m\}_{m=1, \cdots, M}$ such that the mixed policy, which selects a greedy policy $\pi^m$ at the start of an episode with probability $\alpha_m$ and subsequently follows it, has the same occupancy measure $q$ as the original policy $\pi$. 
\label{lem:decom}
\end{lemma}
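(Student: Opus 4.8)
The plan is to construct the greedy policies and their weights by a greedy "flow decomposition" of the occupancy measure $q$, proceeding step by step from $h=1$ to $h=H$, and to verify by induction that the resulting mixed policy reproduces $q$ exactly. The key observation is that $q$ defines, for each step $h$, a probability distribution over state-action pairs (since $\sum_{x,a} q_h(x,a)=1$ follows from \eqref{eq:transition}-\eqref{eq:init-distr}), and the induced policy $\pi_h(a|x)=q_h(x,a)/\sum_{a'}q_h(x,a')$ together with the transition kernel recovers these marginals. A mixed policy over greedy policies $\{\pi^m\}$ with weights $\{a_m\}$ induces the occupancy measure $\sum_m a_m q^{(m)}$, where $q^{(m)}$ is the (deterministic-policy) occupancy measure of $\pi^m$; so the goal is to exhibit weights with $\sum_m a_m q^{(m)} = q$.

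First I would set up the decomposition recursively on the "decision tree" of $q$. At each step-state pair $(h,x)$ with $\sum_a q_h(x,a)>0$, the policy $\pi$ splits its probability mass among the actions in ${\cal D}_{h,x}(q)$ according to the conditional weights $\pi_h(a|x)$. A greedy policy $\pi^m$ corresponds to a selector that picks exactly one action from each ${\cal D}_{h,x}(q)$; enumerate these selectors $m=1,\dots,M_q$ and, for selector $m$, define its weight as the product $a_m=\prod_{(h,x):\,q_h(x,\cdot)\neq 0}\pi_h(a^m_{h,x}\mid x)$ where $a^m_{h,x}$ is the action that selector $m$ assigns to $(h,x)$. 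These $a_m$ are nonnegative and, since for each $(h,x)$ the conditional weights sum to one, a telescoping/Fubini argument over the product shows $\sum_m a_m = 1$, so $\{a_m\}$ is a genuine probability distribution. (One technical point to handle cleanly: states unreachable under $q$ can be assigned an arbitrary fixed action, since they carry zero occupancy and never affect the marginals; this keeps $M_q$ finite and the selectors well-defined over all of ${\cal S}$.)

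Next I would prove the identity $\sum_m a_m q^{(m)}_h(x,a) = q_h(x,a)$ by induction on $h$. For $h=1$ this is immediate from the common initial state $x_{ini}$ and the definition of $a_m$ and $\pi_1(\cdot|x_{ini})$. For the inductive step, write $q^{(m)}_{h+1}(x',a') = \pi^m_{h+1}(a'|x')\sum_{x,a}\mathbb{P}_h(x'|x,a)q^{(m)}_h(x,a)$, multiply by $a_m$, and sum over $m$. The crucial algebraic step is that the weight $a_m$ factors across step-state pairs, so the sum over $m$ can be carried out in two stages: first over the selector choice at $(h+1,x')$ — which, because $\pi^m_{h+1}(a'|x')$ is the indicator of that choice and the remaining factor of $a_m$ contributes $\pi_{h+1}(a'|x')$ — and then over all other coordinates, which by the induction hypothesis collapses $\sum_m (\text{rest of }a_m)\, q^{(m)}_h(x,a)$ to $q_h(x,a)$. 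Combining with Bellman-flow equation \eqref{eq:transition} for $q$ itself gives $\sum_m a_m q^{(m)}_{h+1}(x',a') = \pi_{h+1}(a'|x')\sum_{x,a}\mathbb{P}_h(x'|x,a)q_h(x,a) = q_{h+1}(x',a')$, closing the induction.

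The main obstacle is the bookkeeping in that two-stage summation: one must be careful that the factorization $a_m=\prod_{(h,x)}\pi_h(a^m_{h,x}|x)$ interacts correctly with the fact that $\pi^m_{h+1}(a'|x')$ already "uses up" the $(h+1,x')$-coordinate of the product, and that coordinates at steps $>h+1$ are free and sum to one. A clean way to avoid index gymnastics is to prove instead the stronger statement that the \emph{trajectory distribution} of the mixed policy equals that of $\pi$: condition on the full greedy policy $\pi^m$ drawn at the start, note that along any realized trajectory only the coordinates $(h,x_h)$ actually visited matter, and show by a direct induction on trajectory prefixes that $\mathbb{P}_{\text{mix}}(x_1,a_1,\dots,x_h,a_h) = \mathbb{P}_\pi(x_1,a_1,\dots,x_h,a_h)$; marginalizing then yields equality of occupancy measures. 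Everything else is routine.
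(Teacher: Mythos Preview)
Your proposal is correct and defines exactly the same decomposition as the paper: greedy policies indexed by selectors over $\otimes_{h,x}{\cal D}_{h,x}(q)$ with product weights $a_m=\prod_{h,x}\pi_h(a^m_{h,x}\mid x)$. The paper's proof is precisely your suggested alternative at the end: it observes that pre-sampling an independent action table $B(h,x)\sim\pi_h(\cdot\mid x)$ before the episode and then acting greedily according to $B$ is distributionally identical to running $\pi$, which immediately gives equality of trajectory laws (and hence occupancy measures) without the inductive bookkeeping you outline first.
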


Online model-free algorithms for CMDPs, such as Triple-Q \cite{WeiLiuYin_21}, guarantee sublinear regret and zero constraint violation on average but have no convergence guarantee. In fact, Triple-Q continues to adjust the dual variable (virtual queue) based on constraint violation, and when the dual variable is fixed (within a frame), the algorithm reduces to the traditional Q-learning. As suggested in the paper \cite{WeiLiuYin_21}, we can only recover a near-optimal policy by remembering all previous policies and then uniformly sampling one from them for each episode, i.e., a mixed policy of $K$ policies. Therefore, this near-optimal policy is a mixture of many, many greedy policies. More importantly, it is near-optimal only when averaging over a large number of episodes and may be far from optimal in each episode. 

Hence, unlike unconstrained MDPs where Q-learning converges to the optimal policy, finding a model-free algorithm that converges to the optimal policy or a near-optimal policy in CMDPs is highly nontrivial and remains to be an open problem. Lemma \ref{le:spar} (limited stochasticity), however, suggests that when the number of constraints, $N,$ is relatively small, solving an unknown CMDP may not differ significantly from solving an unknown MDP. This is because the majority of the decisions, specifically $HS-N$ out of the $HS$ decisions, are greedy and can be learned using traditional algorithms like Q-learning if we can first identify where the stochastic decisions need to be taken. Lemma \ref{lem:decom} further suggests that an optimal policy can be decomposed into $M$ greedy policies if all decision types are correctly identified, so we may recover an optimal or a near-optimal policy by evaluating the $M$ greedy policies.  

Leveraging these two observations from Lemma \ref{le:spar} and \ref{lem:decom}, we propose a novel three-phase algorithm (Algorithm \ref{alg1}), including policy pruning, policy refinement, and policy identification, called PRI.  
Policy Pruning includes no more than $8HSAK^{0.25}\log K$ episodes, while Policy Refinement and Policy Identification run $K$ episodes separately. The algorithm is used to solve a CMDP with tightened constraints $\tilde{\rho}^{(n}=\rho^{(n)} + \epsilon_\rho,$ where $\epsilon_\rho=\frac{\log^2 K}{\sqrt{K}},$ which allows us to reduce the constraint violation to zero while maintaining the regret at the same order. 

\begin{algorithm}[!htb]
    \caption{PRI}
    \begin{algorithmic}
        \State Consider CMDP with tightened constraints $\forall i \in [n], \tilde{\rho}^{(i)} =\rho^{(i)} + \epsilon_\rho,$ where $\epsilon_\rho=\tilde{\cal O}(1/\sqrt{K}).$
\State Run Policy Pruning (Alg. \ref{alg:mul}) to obtain $\{\tilde{\cal D}_{h,x}\}_{h,x}$
\State Run Policy Refinement (Alg. \ref{alg:ref}) with $\{\tilde{\cal D}_{h,x}\}_{h,x}$ to obtain $\{\alpha_i\}_{i=1}^m$
    \State Run Policy Identification (Alg. \ref{alg:ide}) with $\{\alpha_i\}_{i=1}^m$ to obtain policy $\tilde{\pi}.$
    \State Output policy $\tilde{\pi}.$
    \end{algorithmic}
    \label{alg1}
\end{algorithm}

\begin{algorithm}[!htb]
    \caption{Policy Pruning}
    \begin{algorithmic}
        \State Initialize $\tilde{\cal D}_{h,x} = {\cal A}$ for all $(h,x),$ set $z_h(x,a)=0$ and $\hbox{flag}(h,x,a)=0$ for all $h,$ $x$ and $a.$
       \For {$t=1, 2, ..., 4\log K$}
            \State Run Triple-Q for $K^{0.25}$ episodes  and record $N_h(x,a),$ the number of times action $a$ is used when the system is in state $x$ at step $h.$
            \For {all $h,$ $x,$ and $a$}
                \If{$N_h(x,a)\leq K^{0.2}$}
                    \State $z_h(x,a)\leftarrow z_h(x,a)+1$
                \EndIf
            \EndFor
        \EndFor
         \For {all $h,$ $x,$ and $a$}
            \If{$z_h(x,a)\geq 2\log K$}
            \State $\tilde{\cal D}_{h,x}\leftarrow \tilde{\cal D}_{h,x}\setminus \{a\}$  and $\hbox{flag}(h,x,a)=1.$
            \EndIf
        \EndFor  
            \While{$\exists \hbox{flag}(h', x', a') = 0$}
            \State Set $z=0,$ $\hbox{flag}(h', x', a') = 1,$ and $\hat{\cal D}=\otimes_{(h,x)\not=(h', x')}\tilde{\cal D}_{h,x}\otimes(\tilde{\cal D}_{h',x'}\setminus\{a'\})$
            \For {$t=1, 2, ..., 4\log K$}
                \State Update $z\leftarrow z+$Compare($\tilde{\cal D}$, $\hat{\cal D}$)(Alg. \ref{alg:comp})
            \EndFor
            \If {$z \geq 2{\log K}$}
                \State Update $\tilde{\mathcal{D}}_{h',x'} \leftarrow \tilde{\mathcal{D}}_{h',x'} \setminus \{a'\}$
            \Else 
                \State Set $z=0,$ $\hbox{flag}(h', x', a') = 1,$ and $\hat{\cal D}=\otimes_{(h,x)\not=(h', x')}\tilde{\cal D}_{h,x}\otimes\{a'\}$
                \For {$t=1, 2, ..., 4\log K$}
                    \State Update $z\leftarrow z+$Compare($\tilde{\cal D}$, $\hat{\cal D}$)(Alg. \ref{alg:comp})
                \EndFor
                \If {$z \geq 2{\log K}$}
                    \State Set $\hbox{flag}(h', x', a) = 1$ for all $a\in \tilde{\cal D}_{h', x'}$ and  update $\tilde{\mathcal{D}}_{h',x'} \leftarrow  \{a'\}$
                \EndIf
            \EndIf
            \EndWhile
        \State Return $\{\tilde{\cal D}_{h,x}\}_{h,x}$
    \end{algorithmic}
    \label{alg:mul}
\end{algorithm}

\begin{algorithm}[!htb]
    \caption{Compare($\tilde{\cal D},$ $\hat{\cal D}$)}
    \begin{algorithmic}
        \State Set $z = 0$ 
        \For {$t=1, 2, ..., 4\log K$}
            \State Run Triple-Q for $K^{0.25}$ episodes with action space $\tilde{\cal D}$. Record the average cumulative reward ${v}^*$.
            
            \State Reset Triple-Q and run it for $K^{0.25}$ episodes with action space $\hat{\cal D}$. Record the average cumulative reward $\tilde{v}$ and average cumulative utilities $\tilde{w}^n.$
            \If {$|v^* - \tilde{v}|\leq \frac{4}{K^{0.03}}$ and $\tilde{w}^n\geq \tilde{\rho}^{(n)}$ for all $n$}
                \State $z \leftarrow z + 1$
            \EndIf
        \EndFor
        \If {$z \geq 2{\log K}$}
            \State Return 1
        \Else
            \State Return 0
        \EndIf
    \end{algorithmic}
    \label{alg:comp}
\end{algorithm}

\begin{algorithm}[!htb]
    \caption{Policy Refinement}
    \begin{algorithmic}
    \State {\bf Input:}  $\{\tilde{\cal D}_{h,x}\}_{h,x}.$
        \State Obtain $M$ greedy policies from $\{\tilde{\cal D}_{h,x}\}_{h,x}$ where $M=\prod_{h,x} |\tilde{\cal D}_{h,x}|.$ 
        \State Set $\epsilon'=\frac{1}{\log K}$
    \If {$M=1$}
        \State Output the greedy policy directly.
    \Else
    \State Set $\hat{V}^{\pi^m}_1=0, \hat{W}^{\pi^m, n}_1=0,$  and $\alpha_m=\frac{1}{M}$ for all $n$ and $m.$    
    \For{ round $t=1, \cdots, \sqrt{K}$}
    \For{$m=1, \cdots, M$}
    \For{$k=1, \cdots, \alpha_m \sqrt{K}$}
    \State Execute greedy policy $\pi^m$ for one episode. 
    \If {$k\leq \epsilon' \sqrt{K}$} 
    \State Set  $\hat V^{\pi^m}_1\leftarrow \hat V^{\pi^m}_1+V^{\pi^m}_{k,1}$ and $\hat W^{\pi^m,n}_1\leftarrow \hat W^{\pi^m,n}_1+W^{\pi^m,n}_{k,1}$ for all $n,$ where  $ V^{\pi^m}_{k,1}$ and $W^{\pi^m,n}_{k,1}$ are the total reward and utility of type $n$ received in the $k$th episode.
    \EndIf
    \EndFor
    \State Set $\bar{V}^{\pi^m}_1 = \frac{\hat V^{\pi^m}_1}{t\epsilon' \sqrt{K}}$ and  $\bar{W}^{\pi^m, n}_1 = \frac{\hat W^{\pi^m, n}_1}{t\epsilon' \sqrt{K}}$  for all $n.$
    \EndFor
    \State Update $\{\alpha_m\}$ by solving Decomposition-Opt \eqref{decomp-opt}. 
    \EndFor
    \EndIf
    \State Return $\{\alpha_i\}_{i=1}^m$.
    \end{algorithmic}
    \label{alg:ref}
\end{algorithm}

\begin{algorithm}[!htb]
\caption{Policy Identification}
\begin{algorithmic}
\State {\bf Input:} $\{\alpha_i\}_{i=1}^m.$
    \State  Initialize ${N}_h(x,a)=0$ for all $h,$ $x$ and $a.$
    \For{$t=1, \cdots, \sqrt{K}$}
    \For{$m=1, \cdots, M$}
    \For{$k=1, \cdots, \alpha_m \sqrt{K}$}
    \For{$h=1, \cdots, H$}
        \State Take action $a_h$ given by policy $\pi^m,$ i.e. $\pi^m(a_h|x_h)=1.$ 
        \State ${N}_h(x_h,a_h)\leftarrow {N}_h(x_h,a_h) +1.$ 
    \EndFor
    \EndFor
    \EndFor
    \EndFor
    \State For all $(h,x,a)$, set $\tilde{\pi}_h(a|x)=\frac{N_h(x,a)}{\sum_{\tilde{a}\in{\cal A}}N_h(\tilde{a}, x)}.$
    \State Return $\tilde{\pi}$
\end{algorithmic}
\label{alg:ide}
\end{algorithm}

Denote $\tilde{\cal D}_{h,x}$ as the set of actions for state $x$ and step $h.$ 
The key idea of Policy Pruning (Algorithm \ref{alg:mul})  is to evaluate each remaining action in of $\tilde{\cal D}_{h',x'}$. The algorithm first decides whether  $a'\in \tilde{\cal D}_{h',x'}$ can be removed  while retaining at least one optimal policy for the tightened CMDP, i.e., with the following action space:
$$\otimes_{(h,x)\not=(h', x')}\tilde{\cal D}_{h,x}\otimes(\tilde{\cal D}_{h',x'}\setminus\{a'\}).$$
This is done by running Triple-Q with the above action space for $K^{0.25}$ episodes and comparing the average total reward with that obtained by running Triple-Q with the original action space for $K^{0.25}$ episodes.\footnote{A brief review can be found in the Appendix \ref{sec:ap-qqq}. We remark that PRI can be viewed as a ``meta-algorithm'' that builds on any model-free CMDP algorithm with sublinear regret and constraint violation.} If the difference is small and the tightened constraints are not violated, then with probability $1 - {\cal O}(K^{-0.02})$, at least one of the optimal policies is retained so we can remove action $a'$ from $\tilde{\cal D}_{h',x'}.$ To improve the probability, the algorithm repeats the comparison for $4\log K$ times and uses the majority rule to decide whether to remove $a'$. In that case, with probability $1 - {\cal O}(K^{-\frac{9}{8}})$, the decision is made correctly. 

If $a'$ is not removed, then any optimal policy in $\otimes_{(h,x)}\tilde{\cal D}_{h,x}$ has to use action $a'$ in state $x'$ at step $h'$. Policy Pruning next determines whether using $a'$ alone is sufficient, i.e., whether an optimal policy is retained in the following action space
$$\otimes_{(h,x)\not=(h', x')}\tilde{\cal D}_{h,x}\otimes(\tilde{\cal D}_{h',x'}=\{a'\}).$$
This is again done by running Triple-Q with the above action space for $K^{0.25}$ episodes and comparing the average total reward with that under Triple-Q with the original action space. If the difference is small and the tightened constraints are not violated, then with a high probability, one of the optimal policies takes a greedy decision at $(h',x')$ with action $a'.$ The algorithm repeats $4\log K$ comparisons and uses the majority rule to make the final decision on $a'.$ If using $a'$ alone is not sufficient, the algorithm keeps $a'$ in $\tilde{\cal D}_{h',x'}$ and moves to a different action in $\tilde{\cal D}_{h',x'}.$ Note that we use Triple-Q for ${\cal O}(K^{0.25}\log K)$ episodes for each  action instead of $K$ episodes because it is easier to learn whether an optimal policy still exists than learning the actual optimal policy.

After the first phase, PRI obtains $M$ greedy policies from the remaining actions. In the second phase, PRI learns the weights $\{\alpha_m\}$ so that a mixed policy that chooses policy $\pi^m$ with probability $\alpha_m$ is statistically identical to the optimal policy for the tightened CMDP. This is achieved by learning the reward and utility value functions of the greedy policies and then solving an approximated version of the CMDP (Decomposition-Opt \eqref{decomp-opt}).

At each round of the second phase (policy refinement), the following optimization with $M$ optimization variables is solved. 
\begin{equation}
    \begin{aligned}
        &\hbox{\bf Decomposition-Opt:}\\
        & \max_{\{a_m\}_{m=1}^M} \sum_{m=1}^{M} \alpha_m\bar{V}^{\pi^m}_1\\
        \hbox{s.t.:} & \left|\sum_{m=1}^{M} \alpha_m \bar{W}^{\pi^m,n}_1 - \tilde{\rho}^{(n)}\right| \leq \sqrt{\frac{H^2\log\left(t\epsilon' {K}\right)}{\epsilon' t\sqrt{K}}} \, \forall n,\\
        &\sum_m \alpha_m=1, \alpha_m\geq \epsilon'\quad \forall m. 
        \end{aligned}
        \label{decomp-opt}
\end{equation}

After learning sufficiently accurate $\{\alpha_m\}$ in the second phase, PRI learns the occupancy measure under the mixed policy defined by $\{\alpha_m\}$ and constructs a Markov policy $\tilde \pi$ based on the learned occupancy measure.

Consider a CMDP with action space $\hat{\cal D}$ such that $\hat{\cal D}_{h,x}\subseteq {\cal A},$ and $\Pi_{\hat{\cal D}}$ be the set of all policies induced from the reduced action space. If an optimal policy for the  CMDP cannot be constructed from the reduced action space, then it implies that 
$$\sigma_{\hat{\cal D}}:=\min_{\pi\in \Pi_{\hat{\cal D}}}\max\left\{\tilde{V}_1^{\pi^*}-V_1^{\pi}, \max_n\left(\tilde{\rho}^{(n)}-W_1^{\pi,n}\right)\right\}>0,$$
i.e., either the reward value function is smaller or one of the constraints has to be violated under any policy on the reduced action space. Note that the minimum is well defined because the set of occupancy measures induced by the policies in $\hat{\cal D}$ is convex. Define 
$$\sigma_{\min}=\min_{\sigma_{\cal D}\not=0} \sigma_{\cal D}.$$ 
In spirit, $\sigma_{\min}$ is equivalent to the reward gap between the best and the second best arm in bandits. We say a CMDP instance is {\em well-separated} if $\sigma_{\min}$ is a positive constant independent of $K.$ 

\begin{theorem} Assume the CMDP is well separated, and Slater's condition holds for the associated LP. Running PRI for $\tilde{K}=20H^2SAK^{0.25}\log K+2K.$ For a sufficiently large $K$, we have
\begin{itemize}[leftmargin=*]
    \item PRI guarantees $\tilde{\cal O}(H\sqrt{K})$ regret and zero constraint violation, and
        \item with probability $1-\tilde{\cal O}\left(\frac{1}{\sqrt{K}}\right)$, PRI yields policy $\tilde{\pi}$ such that 
 with $V_1^{\tilde{\pi}} = V_1^{\pi^*}-\tilde{\cal O}\left(\frac{1}{\sqrt{K}}\right)$ and $W_1^{\tilde{\pi},n} \geq \rho^{(n)}$ for all $n.$ 
The algorithm also has no more than $N$ stochastic decisions. 
\end{itemize}  \label{thm:main}   
\end{theorem}
The proof of this theorem can be found in the next section.

\begin{theorem} Given any online learning algorithm, there exists a CMDP instance that is well-separated and its LP satisfies Slater's condition such that either the regret or the constraint violation is $\Omega(H\sqrt{K})$ under the algorithm. 
\end{theorem}
The proof can be found in Appendix \ref{app:lb} and is based on the example given in \cite{CheGanSal_22} for safe linear bandits.  
We remark that the $\Omega(H\sqrt{HSA K})$ lower bound \cite{JinAllBub_18,DomMenKau_21} for unconstrained MDPs does not apply here because the lower bound is based on an MDP instance that is {\em not} well separated. 

\section{Analysis}

We define the set of optimal policies for the {\em tightened} CMDP as $\Pi^*$ and the subset associated with extreme points as $\Pi^{*,e}$. Furthermore, given an action space $\tilde{\cal D}=\prod_{(h,x)} \tilde{\cal D}_{h,x},$ let $\Pi_{\tilde{\cal D}}$ to be the set of feasible policies given the action space. 

The following theorem shows that after Policy Pruning, the algorithm outputs an action space $\tilde{\cal D}$ such that no more action can be removed without losing the optimality and Pruning phase inccurs at most $20H^2SAK^{0.25}\log K$ regret and constraint violation. 
\begin{theorem} \label{the:multi}
When $K$ is sufficiently large, with probability $1-\tilde{\cal O}(K^{-9/8}),$  Policy Pruning outputs 
$\tilde{\cal D}$ such that $\Pi_{\tilde{\cal D}}\bigcap \Pi^*\not=\emptyset$ but $\Pi_{\tilde{\cal D}'}\bigcap \Pi^*=\emptyset$ for any $\tilde{\cal D}'\subset \tilde{\cal D}.$
In other words, no action can be further removed without losing the optimality.  Furthermore, the regret and constraint violation, with respect to the tightened CMDP, during Policy Pruning are bounded by $20H^2SAK^{0.25}\log K$ with probability one.   
\end{theorem}
More discussions and detailed proof are deferred to Appendix \ref{sub4} due to the page limit. 

The following theorem shows that the regret and constraint violation during the refinement phase. The proof can be found in Appendix \ref{sub2}.

\begin{theorem}[Refinement]
The regret and constraint violation, with respect to the tightened CMDP, during the policy refinement phase are both ${\mathcal{O}}(H\sqrt{K}\log K).$ 
\label{thm:refinement}
\end{theorem}

Since the regret and constraint violations are sublinear, with a high probability, the refinement phase learns a near optimal mixed policy, which is a combination of $M$ greedy policies for $M\leq 2^N.$ In the following theorem, we show that the identification phase is to find a single near-optimal policy by using the occupancy measure of the mixed policy. The proof can be found in Appendix \ref{sub3}.

\begin{theorem}[Identification] \label{thm:ide}
The regret and constraint violation, with respect to the tightened CMDP, during the policy identification phase are $\mathcal{O}(H\sqrt{K}\log K)$. 
Furthermore,   with probability $1-\tilde{\cal O}\left(\frac{1}{\sqrt{K}}\right)$, PRI yields policy $\tilde{\pi}$ such that 
 with $V_1^{\tilde{\pi}} = \tilde{V}_1^*-{\cal O}\left({{\frac{H\log K}{\sqrt{K}}}}\right)$ and $W_1^{\tilde{\pi},n} \geq \rho^{(n)}+\epsilon_\rho-{\cal O}\left({{\frac{H\log K}{\sqrt{K}}}}\right)$ for all $n,$ where $\tilde{V}^*_1$ is the solution to the tightened CMDP. 
The algorithm also has no more than $N$ stochastic decisions. 
\end{theorem}

Theorems \ref{the:multi}-\ref{thm:ide} established the regret and constraint violation bounds under RPI with respect to the tightened CMDP. The following lemma (Lemma 1 in \cite{WeiLiuYin_22}) establishes the connection between the original CMDP and the tightened CMDP. 
 
\begin{lemma}\label{le:conservative}
Denote by $\pi^*$  an optimal policy for the original CMDP and $\tilde{\pi}^*$ an optimal policy for the tightened CMDP. If Slater's condition holds for the LP associated with the original CMDP, we have $V^{\pi^*}_1 - V^{\tilde{\pi}^*}_1 \leq \frac{H\epsilon_\rho}{\delta}$, where $\delta$ is Slater's constant. 
\end{lemma}

\begin{proof}[Proof of Theorem \ref{thm:main}]
By summarizing the results from Theorem \ref{the:multi}-\ref{thm:ide}, we can first conclude that the regret, with respect to the tightened CMDP, is bounded by 
\begin{align*}
&HSAK^{0.25}\log K+{\mathcal{O}}(H\sqrt{K}\log K)+\mathcal{O}(H\sqrt{K}\log K)\\
=&\mathcal{O}(H\sqrt{K}\log K)=\tilde{\mathcal{O}}(H\sqrt{K}).    
\end{align*}
 Based on Lemma \ref{le:conservative}, we can further conclude that the regret with respect to the original CMDP is bounded by 
\begin{align*}
\tilde{\mathcal{O}}(H\sqrt{K})+\tilde{K}\frac{H\epsilon_{\rho}}{\delta}=\tilde{\mathcal{O}}(H\sqrt{K}),    
\end{align*} where $\tilde{K}=20H^2SAK^{0.25}\log K+2K$ is the total number of episodes under PRI. 

Similarly, the constraint violation with respect to the tightened constraints is also $\mathcal{O}(H\sqrt{K}\log K)$ for all $n,$ i.e. 
 $$\tilde{K} \left(\rho^{(n)}+\epsilon_{\rho}\right) - \mathbb{E}\left[\sum_{k=1}^{\tilde{K}} W_1^{\pi_k,n} \right]={\mathcal{O}}(H\sqrt{K}\log K),$$ which implies for a sufficiently large $K,$
 $$\tilde{K}\rho^{(n)}- \mathbb{E}\left[\sum_{k=1}^{\tilde{K}} W_1^{\pi_k,n} \right]={\mathcal{O}}(H\sqrt{K}\log K) - \tilde{K}\epsilon_{\rho}<0,$$
where the last inequality holds due to our choice of $\epsilon_{\rho}=\frac{\log^2 K}{\sqrt{K}}.$ Based on Theorem \ref{thm:main} and Lemma \ref{le:conservative}, we can also conclude that with a high probability, the identified policy satisfies 
 ${V}_1^{\pi^*}-V_1^{\tilde{\pi}} \leq \frac{H\epsilon_{\rho}}{\rho}+\tilde{\cal O}\left(\frac{1}{\sqrt{K}}\right)=\tilde{\cal O}\left(\frac{1}{\sqrt{K}}\right)$ and $W_1^{\tilde{\pi},n} \geq \rho^{(n)}$ for all $n.$

\end{proof}
Note that the order-wise bounds are independent of $S$ and $A,$ unlike those in the literature. However, there is an implicit dependence on $S$ and $A$ as the results hold only when $K$ is sufficiently large and how large $K$ needs to be depends on $S$ and $A.$    

\section{Experiments}
\subsection{ Synthetic CMDP}

This section presents numerical evaluations of the proposed algorithm. We first evaluated our algorithm for a synthetic CMDP with a single constraint. The objective is to maximize the cumulative reward while guaranteeing that the cumulative utility is at least $2.$ Comparison between Triple-Q and PRI can be found in Figure~\ref{fig:cmdp-re}. Experiment details can be found in Appendix \ref{app:sim-mdp}.

We can observe that PRI significantly outperforms Triple-Q on regret. At the end of the $8\times 10^6$ episodes, Triple-Q has a regret of $1.57 \times 10^6$ and constraint violation of $-4.17\times 10^5$. In contrast, the regret and constraint violation under PRI are $ 6.89\times 10^4$ and $-8 \times 10^4$, respectively. Thus, PRI has significantly lower regret than Triple-Q while achieving zero constraint violation. Since the full CMDP model is given, we can obtain the optimal solution by using the linear programming approach. The cumulative reward and cumulative utility we get for our learned policy are $1.561,$ and $2.016,$ which match the optimal solution $1.573$ and $2.$ The learned policy includes a single stochastic decision and can be found in Appendix \ref{app:sim-mdp}.

\begin{figure}[tb]
\centering
   \begin{subfigure}[b]{0.49\textwidth}
         \centering
         \includegraphics[width=8cm,height=5cm]{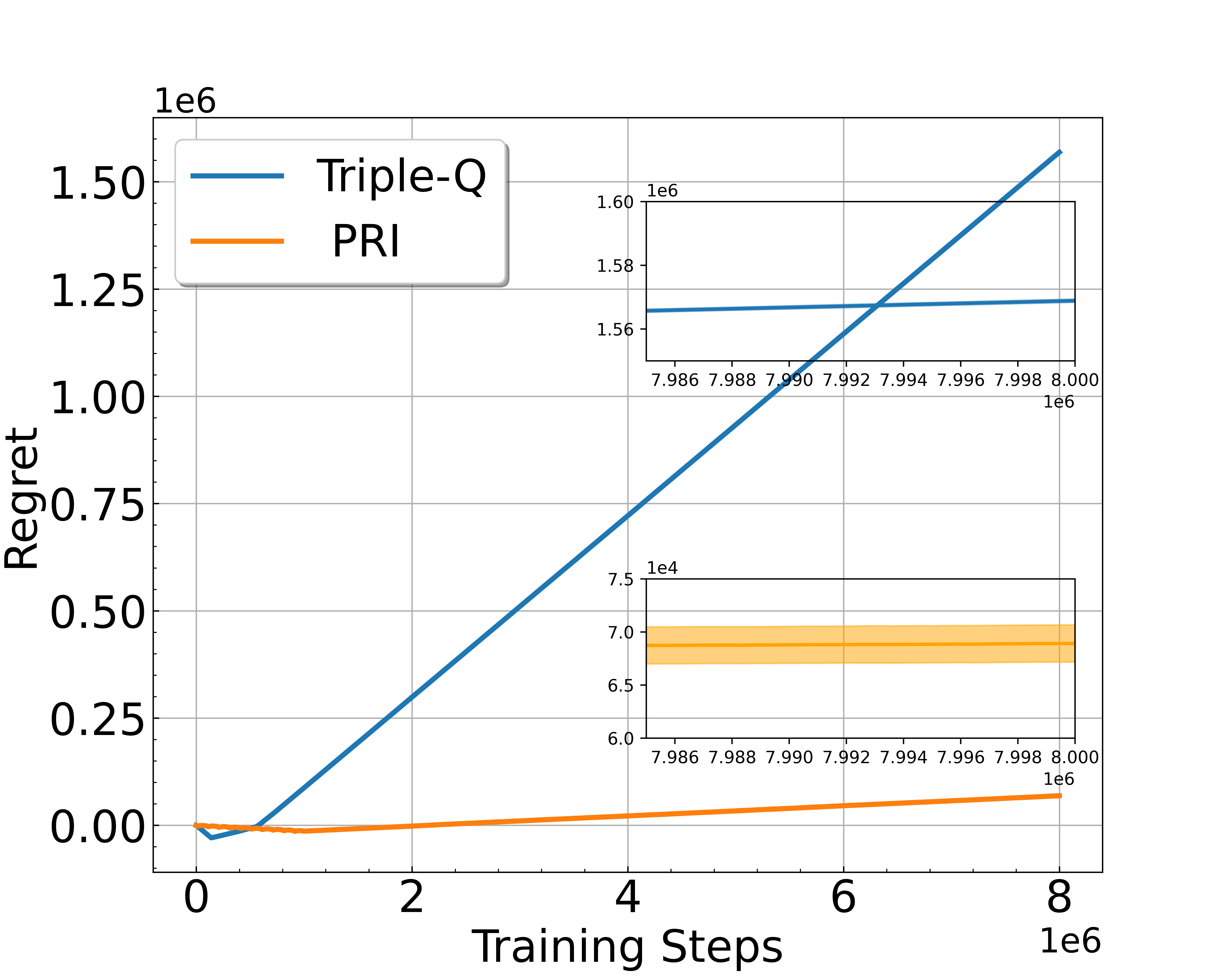}
         \caption{Regret}
         \label{fig:CMDP_r}
     \end{subfigure}
 \hfill
    \begin{subfigure}[b]{0.49\textwidth}
         \centering
         \includegraphics[width=8cm,height=5cm]{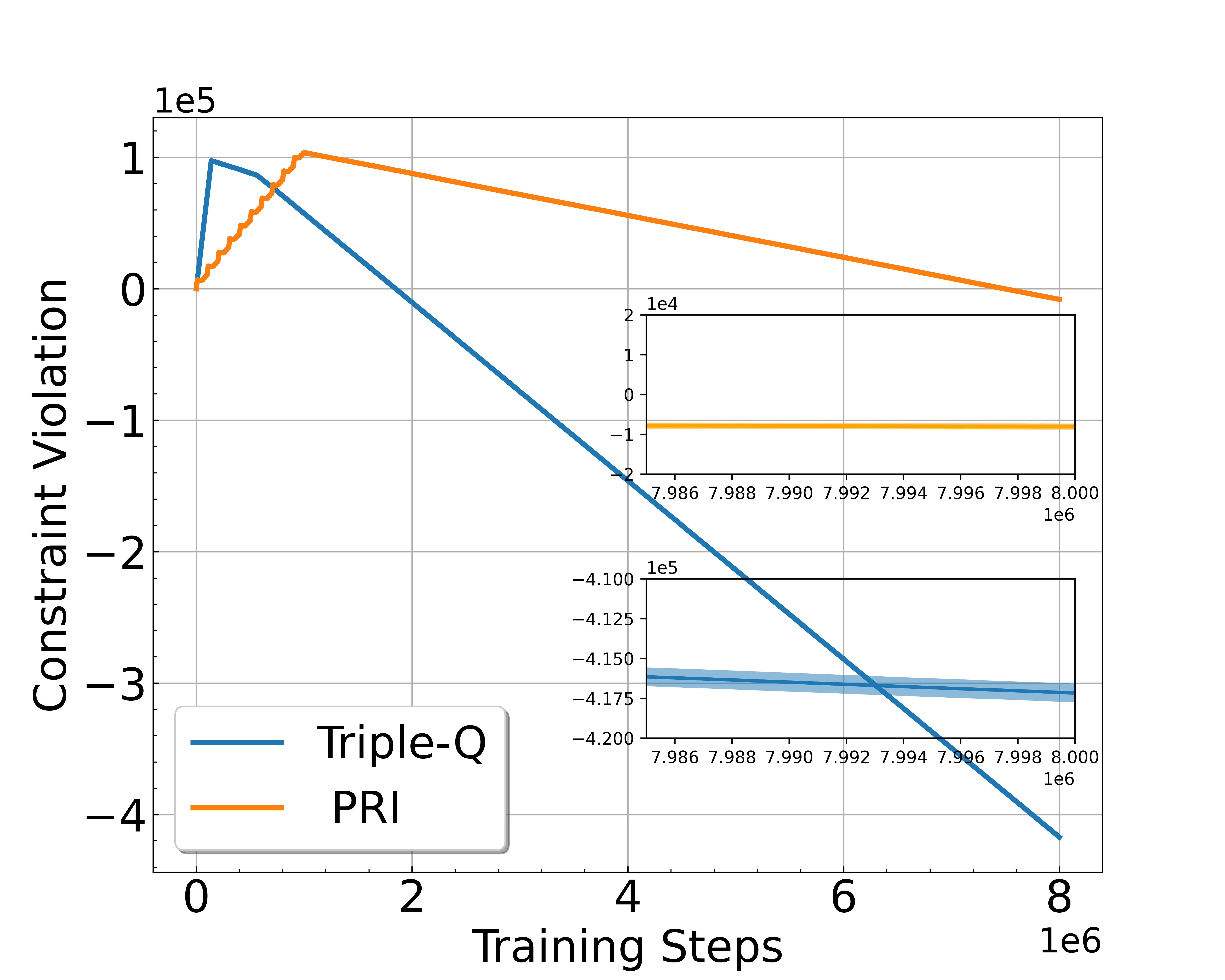}
         \caption{Constraint Violation}
         \label{fig:CMDP_c}
     \end{subfigure}
\caption{Results for a synthetic CMDP with a unique solution, the shaded region represents the 95\% confidence interval.} \label{fig:cmdp-re}
\end{figure}
\subsection{ Grid-world}

In our second experiment, which is a grid-world environment (refer to Appendix \ref{sec:app-grid} for details), we compared Triple-Q with PRI, and the results are shown in Figure \ref{fig;grid-result}. PRI consists of $200,000$ episodes for the initial Triple-Q phase, followed by $200,000$ episodes for each comparing pruning phase(Algorithm 3). Note that in order to achieve a better performance, we add an early stop for each policy comparison, that is, we terminate the comparison if the constraint violation or the regret with the reduced action space becomes too large. Both policy refinement and policy identification phases include $5,000,000$ episodes each. For reference, we ran Triple-Q for the same number of episodes. The results on regret and constraint violation are shown in Figure \ref{fig:2r} and \ref{fig:2c}. We can observe that Triple-Q has a regret of $3.19 \times 10^6$ and a constraint violation of $-5.27 \times 10^5$, whereas PRI achieves $1.69 \times 10^5$ regret and $-6.51 \times 10^3$ constraint violation, indicating substantially lower regret with PRI. For learned policy, the cumulative reward is $5.000$ and the cumulative cost is $0.499$, so the policy satisfies the requirement of cost being $\leq 0.5$. The learned policy takes one stochastic decision and can be found in Appendix \ref{sec:app-grid}. 

\begin{figure}[tb]
\centering
   \begin{subfigure}[b]{0.49\textwidth}
         \centering
         \includegraphics[width=8cm,height=5cm]{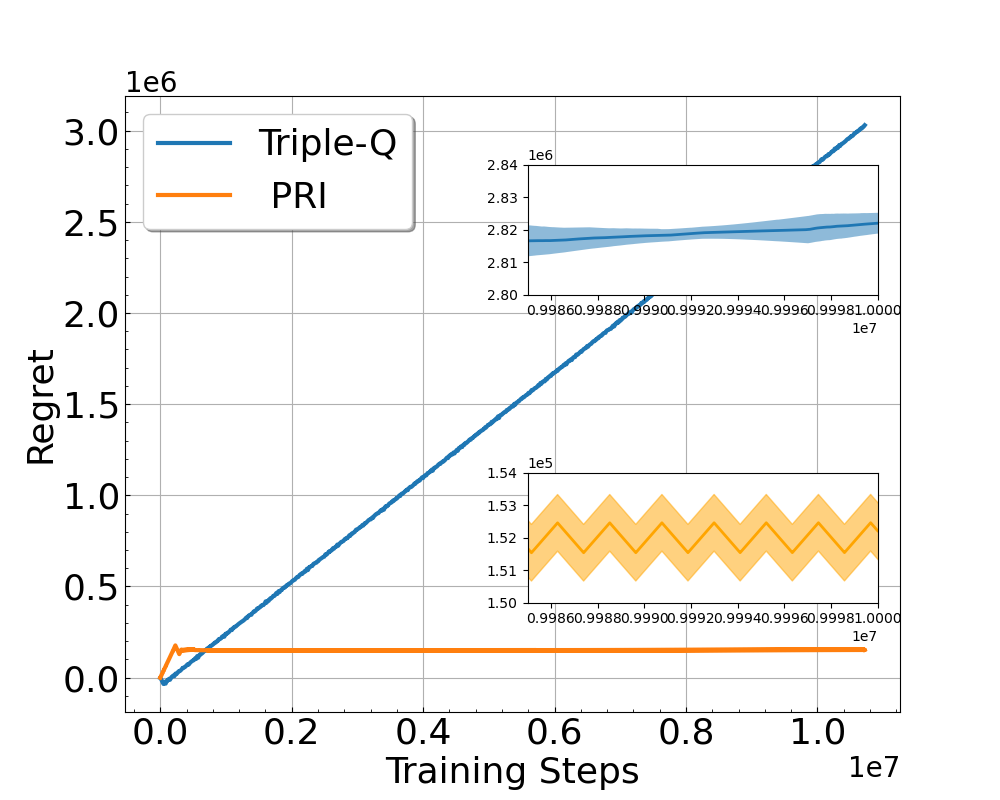}
         \caption{Regret}
         \label{fig:2r}
     \end{subfigure}
 \hfill
    \begin{subfigure}[b]{0.49\textwidth}
         \centering
         \includegraphics[width=8cm,height=5cm]{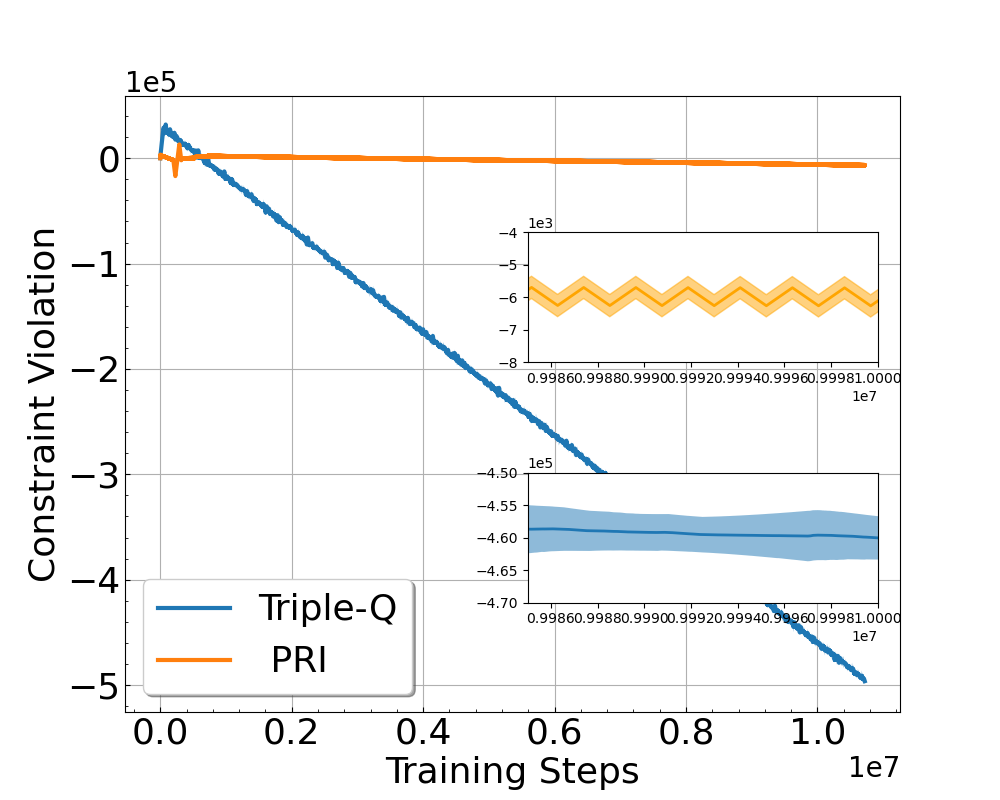}
         \caption{Constraint Violation}
         \label{fig:2c}
     \end{subfigure}
\caption{Results for the grid world environment, the shaded region represents the 95\% confidence interval.} \label{fig;grid-result}
\end{figure}

\section{Conclusions}

In this paper, we developed a model-free, regret-optimal algorithm for online CMDPs, called PRI. The algorithm is based on a fundamental observation that for a CMDP with $N$ constraints, there exists an optimal policy that includes at most $N$ stochastic decisions. In the tabular setting, PRI guarantees $\tilde{\mathcal{O}}(H\sqrt{K})$ regret and zero constraint violation. The regret bound is independent of the size of state and action spaces for sufficiently large $K.$ We can also show that PRI guarantees  $\tilde{\mathcal{O}}(H\sqrt{K})$ regret and $\tilde{\mathcal{O}}(H\sqrt{K})$ constraint violation when the constraint violation cannot be canceled across episodes with some minor modifications of the algorithm. The details can be found in Appendix \ref{appendix:violation}.

\bibliography{refs,inlab-refs}
\bibliographystyle{apalike}

\newpage
\appendix
\onecolumn
\section{NOTATION TABLE}
The most important notations that are repeatedly used are summarized in Table \ref{ta:notations}.
\begin{table}[ht]
	\caption{Notation Table}
	\label{ta:notations}
	\vskip 0.10in
	\begin{center}
            \resizebox{\textwidth}{52mm}{
		\begin{tabular}{c|l}
			\toprule
			Notation & Definition  \\
			\midrule
			$ K $    &   The number  of episodes (the total number of episodes under PRI is $\tilde{K}=20H^2SAK^{0.25}\log K+2K$)\\
			\hline
			$ S$    &   The number of states\\
			\hline
			$ A$    &   The number of actions\\
			\hline
			$ H$    &   The length of each episode\\
                \hline 
                $N$ & The number of constraints\\
			\hline
			$ [H]$    &   Set $\{1,2,\dots,H\}$\\
			\hline
			$Q_{h}^\pi (x,a)$ & The reward Q-function at step $h$ in episode $k$ under policy $\pi$\\
			\hline
			$V_{h}^\pi (x)$ & The value-function at step $h$ in episode $k$ under policy $\pi$\\
			\hline
			$C_{h}^{\pi,n}(x,a)$ & The $n$-th estimated utility Q-function at step $h$ followed by policy $\pi$ \\
			\hline
			$W_{h}^{\pi,n}(x)$ & The $n$-th estimated utility value-function at step $h$ followed by policy $\pi$\\
			\hline
			$r_h(x,a)$ & The reward of (state, action) pair $(x,a)$ at step $h.$ \\
			\hline
			$g_h^{(n)}(x,a)$ & The $n$-th utility of (state, action) pair $(x,a)$ at step $h.$ \\ 
			\hline
			$q_h^*$ & The optimal solution to the LP of the CMDP. \\
			\hline
			${q}_h^{\pi}$ & The corresponding occupancy measure to policy $\pi$ \\
                \hline
                $\epsilon_\rho$ & a small value for tightened constraints\\
                \hline
                $\{\alpha_m\}_{m=1,...M}$ & The probability distribution over a set of $M$ greedy policies to form a mixed policy\\
                \hline
                $\tilde{\cal D}_{h,x}$ & The set of actions that can be taken in state $x$ at step $h$\\
                \hline
                $\Pi^*$ & The set of optimal policies\\
                \hline
                $\Pi_{\tilde{\cal D}}$ & The set of feasible policies given the action space $\tilde{\cal D}$\\
                \hline
                $\sigma_{\cal D}$ & The minimal gap of $V$ and $W^{n}$ between the best policy with reduced action space $\cal D$ and the optimal policy\\
			\hline
			$\delta$ & Slater's constant.\\
			\hline
			$\mathbb{I}(\cdot)$ & The indicator function\\
			\hline
			\bottomrule
		\end{tabular}}
	\end{center}
\end{table}

\section{Review of Triple-Q} \label{sec:ap-qqq}

In this section, we briefly review Triple-Q for CMDPs \cite{WeiLiuYin_22}. The design of Triple-Q is based on the primal-dual approach in optimization. The notions used in this section may be slightly abused, but we will make sure the definitions are clear. Consider the case with only one constraint for simplification. Given a Lagrange multiplier $\lambda,$ we consider the Lagrangian of the problem from a given initial state $x_1:$
\begin{align}
&\max_\pi V_{1}^\pi(x_1)+ \lambda\left(W_{1}^\pi(x_1)-\rho\right) \nonumber \\
=&\max_\pi \mathbb{E}\left[\sum_{h=1}^H r_h(x_h,\pi_h(x_h))+\lambda g_h(x_h,\pi_h(x_h)) \right]-\lambda\rho,\nonumber
\end{align} which is an unconstrained MDP with reward $r_h(x_h,\pi_h(x_h))+\lambda g_h(x_h,\pi_h(x_h))$ at step $h.$ Assuming we solve the unconstrained MDP and obtain the optimal policy, denoted by $\pi^*_\lambda,$ we can then update the dual variable (the Lagrange multiplier) using a gradient method:
\begin{equation}
\lambda \leftarrow \left(\lambda +\rho - \mathbb E\left[W_1^{\pi_\lambda^*}(x_1)\right]\right)^+.
\end{equation} While primal-dual is a standard approach, analyzing the finite-time performance, such as regret or sample complexity, is particularly challenging. Triple-Q is designed as a two-time scale algorithm for addressing the trade-off between regret and constraint violations. 
\begin{itemize}
    \item At each step, Triple-Q updates two the Q-values for $(x_{h-1},a_{h-1})$ after observing $(s_h,a_h),$ reward $r_h(x_h,a_h)$ and utility $g_h(x_h,a_h)$ in a fast time scale. In particular,
    \begin{align*}
        {Q}_{h-1} (x_{h-1},a_{h-1}) \leftarrow (1-\alpha_t)Q_{h-1} (x_{h-1},a_{h-1}) \\
        + \alpha_t\left(r_{h-1}(x_{h-1},a_{h-1})+V_{h} (x_{h})+b_t\right) \\
        {C}_{h-1} (x_{h-1},a_{h-1}) \leftarrow (1-\alpha_t)C_{h-1} (x_{h-1},a_{h-1}) \\
        + \alpha_t\left(g_{h-1}(x_{h-1},a_{h-1})+W_{h}(x_{h})+b_t\right)
    \end{align*}
    \item at the end of each frame, the virtual queue length is updated in a slow time scale manner as $\left(Z +\rho+\epsilon -  \frac{\bar{C}}{K^\alpha}\right)^+,$ where $\bar{C}$ is the average of all the Q-values of the utility function at the initial state-action $(x_1,a_1).$ 
\end{itemize}

The algorithm only needs to know the values of $H,$ $A,$ $S$ and $K,$ and no other problem-specific values are needed.  Furthermore, Triple-Q includes updates of two Q-functions per step: one for $Q_{h}$ and one for $C_{h};$ and one simple virtual queue update per frame. So its computational complexity is similar to SARSA. 

\section{Proofs of the Technical Lemmas} \label{sub5}

\subsection{Proof of Lemma \ref{le:spar} (Limited Stochasticity) } 
\begin{customlemma}{\ref{le:spar}}
If $q^*=\{q^*_{h}(x,a)\}_{h,x,a}$ is an optimal solution to the CMDP problem \eqref{eq:cmdp-offline}-\eqref{eq:proba} and is an extreme point, then there are at most $HS+ N$ nonzero values in $q^*$. This implies that the optimal policy derived from $q^*$ includes at most $N$ stochastic decisions. 
\end{customlemma}
\begin{proof}
The LP has $HSA$ decision variables $\{q_h(s,a)\}$ in total. So at an extreme point, at least $HSA$ constraints become tight. In other words, at least $HSA$ constraints become equalities under solution $q^*$.   
Since there are only $HS+N$ constraints defined in \eqref{eq:constraint}-\eqref{eq:init-distr}, at least $$
HSA-HS-N=HS(A-1) - N$$ constraints in \eqref{eq:proba} become tight (equality) under $q^*.$ Therefore, there are at least $HS(A-1)-N$ zeros in $q^*$ or at most $HS+N$ nonzero values in $q^*$. 

Now suppose the optimal policy obtained from $q^*$ has less than $HS-N$ greedy decisions. Then $q^*$ would have at least $$HS-N-1 + 2(N+1)=HS+N+1$$  nonzero values because each greedy decision requires one nonzero $q_h(x,a)$ and each stochastic decision requires at least two nonzero $q_h(x,a).$ This leads to a contradiction. 
\end{proof}

\subsection{Proof of Lemma \ref{lem:decom} (Decomposition)} \label{sub6}
 \begin{customlemma}{\ref{lem:decom}}
 Given any Markov policy $\pi$ and its corresponding occupancy measure $q,$  there exists a set of $M$ greedy policies and a probability distribution $\{a_m\}_{m=1, \cdots, M}$ such that the mixed policy, which selects a greedy policy $\pi^m$ at the start of an episode with probability $a_m$ and subsequently follows it, has the same occupancy measure $q$ as the original policy $\pi$. 
\end{customlemma}
\begin{proof}
To simplify the notation, we will prove the lemma for the case where $|\tilde{\cal D}_{h,x}(q)|\in\{1,2\},$ i.e., any stochastic decision takes two possible actions, and $a\in\{0, 1\}$.  The extension to the general case is trivial.    

Under a Markov policy $\{\pi_h\}_{h=1}^H$, the actions are independently chosen given state $x$ and step $h.$ Suppose we will execute the Markov policy for $K$ episodes. We will generate $K$ matrices $\{B_k\}_{k=1}^K$ of size $H\times S$ such that $B_k(h,x)$ is a realization of a random variable with distribution $\pi_h(\cdot|x)$. All these values are independently generated.  Now to execute policy $\pi$ at episode $k,$ at state $x$ and step $h,$ the agent takes action $a$ such that $B_k(h,x) = a.$ This is statistically the same as sampling an action using $\pi_h(\cdot|x)$ when reaching state $x$ at step $h.$

We note that each binary matrix $B_k$ corresponds to a greedy policy from the $M$ greedy policies and vice versa. Furthermore, the binary matrix  associated with greedy policy $\pi^m$ is generated with probability 
$$\alpha_m = \prod_{h,x} \left(\sum_{a\in \tilde{\cal D}_{h,x}(q)} \pi_h(a|x)\pi^m_h(a|x)\right),$$ because 
\begin{align*}
    &\sum_{a\in \tilde{\cal D}_{h,x}(q)} \pi_h(a|x)\pi^m_h(a|x)\nonumber \\
    =&\sum_{a\in \tilde{\cal D}_{h,x}(q)} \pi_h(a|x)\mathbb{I}(\pi^m_h(a|x)=1),
\end{align*} which is the probability that action selected by the greedy policy $\pi^m$ is also selected under policy $\pi.$
Therefore, if we consider a mixed policy that chooses policy $\pi^m$ with probability $a_m,$ then it is statistically the same as policy $\pi$ and has the same occupancy measure $q.$ 
\end{proof}

\section{Proof of Theorem \ref{the:multi} (policy pruning)} \label{sub4}

To prove our main result, we first recall the regret and constraint violation guaranteed under Triple-Q \cite{WeiLiuYin_22} in the following lemma.
\begin{lemma} \label{triple-q}
For sufficiently large $K$, over $K$ episodes,  Triple-Q guarantees $\tilde {\cal O} (K^{0.8})$ regret and zero constraint violation, and furthermore, 
    \begin{align}
\Pr\left(    K \tilde{\rho}^{(n)}- \sum_{k=1}^K W_1^{\pi_k,n} \leq 0\right) =1-{\cal O}\left(\frac{1}{K^2}\right).
    \end{align}
\end{lemma}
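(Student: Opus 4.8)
The regret bound $\tilde{\mathcal{O}}(K^{4/5})$ and the zero-expected-violation guarantee are precisely the two main theorems for Triple-Q established in \cite{WeiLiuYin_22}, so the only part requiring a new argument is the high-probability refinement $\bP\!\left(K\rho^n-\sum_{k=1}^K W_1^{\pi_k,n}\le 0\right)=1-\mathcal{O}(1/K^2)$. The plan is to revisit the Lyapunov drift analysis used there to control the virtual queue, convert each in-expectation estimate into a high-probability one via Azuma--Hoeffding at confidence level $\Theta(1/K^2)$, and then pass from the realized trajectory utilities (which the algorithm actually observes) to the value functions $W_1^{\pi_k,n}$ by one further concentration step.

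First I would recall the mechanics of Triple-Q: it operates in frames, keeps for each constraint $n$ a virtual queue $Z^n_k$ driven by the (thresholded) per-episode slack of the realized trajectory utility against a slightly tightened target $\rho^n+\eta$, and within a frame runs Q-learning with UCB bonuses on a Lagrangian-adjusted reward. Their drift-plus-penalty computation telescopes a bound on $\bE[(Z^n_{k+1})^2-(Z^n_k)^2\mid\mathcal{F}_{k-1}]$ to get $\bE[Z^n_{K+1}]=\tilde{\mathcal{O}}(K^{4/5})$, and the tightening $\eta=\tilde{\Theta}(K^{-1/5})$ is calibrated so that the accumulated slack $K\eta$ beats this $\tilde{\mathcal{O}}(K^{4/5})$ queue bound while the induced reward loss stays within the regret budget. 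To make this a high-probability statement I would (i) write the drift inequality with its martingale-difference remainder, namely the gap between the episode-$k$ realized trajectory utility and its conditional mean given $\mathcal{F}_{k-1}$, which lies in $[-H,H]$; and (ii) apply Azuma--Hoeffding at $\delta=\Theta(1/K^2)$, which contributes only a $\tilde{\mathcal{O}}(\sqrt{K\log(1/\delta)})=\tilde{\mathcal{O}}(\sqrt{K})$ term. This yields $Z^n_{K+1}=\tilde{\mathcal{O}}(K^{4/5})$ with probability $1-\mathcal{O}(1/K^2)$, and since $K\eta$ dominates all sublinear error terms for $K$ large, on that event the realized cumulative utility $\sum_{k=1}^K\sum_{h=1}^H g^n_h(x^k_h,a^k_h)$ is at least $K\rho^n$.

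The remaining gap is that the lemma is stated with $W_1^{\pi_k,n}$ rather than the realized utilities. But $\sum_{k=1}^K\big(\sum_{h=1}^H g^n_h(x^k_h,a^k_h)-W_1^{\pi_k,n}\big)$ is a martingale with respect to the episode filtration with increments in $[-H,H]$, so a second Azuma bound at level $1/K^2$ shows the two cumulative sums differ by at most $\tilde{\mathcal{O}}(\sqrt{K})$, once again absorbed into the slack $K\eta$. Intersecting the $N$ failure events for the individual constraints (each of probability $\mathcal{O}(1/K^2)$, with $N$ constant) gives the claimed $1-\mathcal{O}(1/K^2)$ probability. I expect the main obstacle to be bookkeeping rather than a new idea: one must verify that every place in the Triple-Q analysis where an expectation is taken admits a high-probability counterpart at only a $\tilde{\mathcal{O}}(\sqrt{K\log K})$ cost, that the adaptive dependence of $\pi_k$ on the past is handled correctly by conditioning on $\mathcal{F}_{k-1}$, and --- most delicately --- that the regret-optimal choice of $\eta$ still leaves a strict margin over all the (now high-probability) error terms, so that no violation occurs on the good event. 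If \cite{WeiLiuYin_22} already contains a high-probability virtual-queue bound with the right exponent, the argument collapses to quoting it and checking the union bound over the $N$ constraints.
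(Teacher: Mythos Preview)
The paper does not give a proof of this lemma at all: it is introduced with the sentence ``we first recall the regret and constraint violation guaranteed under Triple-Q \cite{WeiLiuYin_22} in the following lemma'' and is thereafter used as a black box. In other words, the paper's ``proof'' is precisely your final caveat: quote the reference.

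Your proposal is therefore strictly more detailed than what the paper supplies. The sketch you give for the high-probability claim is sound: the constraint-tightening margin $\eta=\tilde\Theta(K^{-1/5})$ in Triple-Q creates cumulative slack $K\eta=\tilde\Theta(K^{4/5})$, against which the Azuma fluctuations of order $\tilde{\mathcal O}(H\sqrt{K\log(1/\delta)})$ at $\delta=\Theta(1/K^2)$ are negligible, and the passage from realized utilities to $W_1^{\pi_k,n}$ costs another $\tilde{\mathcal O}(\sqrt{K})$ of the same type. Whether the specific $\mathcal O(1/K^2)$ exponent is stated verbatim in \cite{WeiLiuYin_22} or must be extracted by rerunning their argument at that confidence level is a matter for the reference, but either way the present paper offers nothing beyond the citation, and your plan would fill that gap correctly.
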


Noting that we use Triple-Q in the pruning phase, the assumptions for Triple-Q, like Slater's condition, are still required. We also note that if the LP for the original CMDP satisfies Slater's condition, then the tightened CMDP satisfies Slater's condition for large $K$ as well. 

The following lemma states that if $\Pi_{\tilde{\cal D}}$ includes an optimal policy, then Algorithm \ref{alg:comp} can correctly decide whether $\Pi_{\hat{\cal D}}$ includes an optimal policy with a high probability. 
\begin{lemma}
Assume $\Pi_{\tilde{\cal D}}$ includes an optimal policy to the tightened CMDP. Then with probability $1-{\cal O}(K^{-0.02}),$ Algorithm \ref{alg:comp}, Compare($\tilde{\cal D}$,$\hat{\cal D}$) outputs 1 if $\Pi_{\hat{\cal D}}$ includes an optimal policy and outputs 0 otherwise. 
\label{lem:comp}
\end{lemma}
\begin{proof}

Consider the tightened CMDP with action space $\hat{\cal D}$ such that $\hat{\cal D}_{h,x}\subseteq {\cal A},$ and $\Pi_{\hat{\cal D}}$ be the set of all policies induced from the reduced action space. If an optimal policy for the tightened CMDP cannot be constructed from the reduced action space, then it implies that 
$$\sigma_{\hat{\cal D}}:=\min_{\pi\in \Pi_{\hat{\cal D}}}\max\left\{\tilde{V}_1^{\pi^*}-V_1^{\pi}, \max_n\left(\tilde{\rho}^{(n)}-W_1^{\pi,n}\right)\right\}>0,$$
i.e., either the reward value function is smaller or one of the constraints is a violation under any policy on the reduced action space. Note that the minimum is well defined because the set of occupancy measures induced by the policies in $\hat{\cal D}$ is convex. Define 
$$\sigma_{\min}=\min_{\sigma_{\cal D}\not=0} \sigma_{\cal D}.$$ 
We assume $\sigma_{\min}$ for the original CMDP is a positive constant independent of $K,$ which implies that $\sigma_{\min}$ for the tightened problem is a positive constant independent of $K$ as well because it is tightened by $\frac{\log^2 K}{\sqrt{K}},$ i.e. $\tilde{\cal O}(1/\sqrt{K})$ perturbation to the constraints. 

We first consider the case that $\Pi_{\hat{\cal D}}$ includes an optimal policy. In this case, we will show that $|v^*-\tilde{v}|\leq \frac{2}{K^{0.03}}$ with a high probability. 
Define $$\bar{V}_1=\frac{1}{K^{0.25}} \sum_{k=1}^{K^{0.25}}V_1^{\pi_k},$$
where $\pi_k$ is the policy used in the $k$th episode with action space $\hat{\cal D}$.
Similarly, define $$\bar{V}^*_1=\frac{1}{K^{0.25}} \sum_{k=1}^{K^{0.25}}V_1^{\pi_k},$$ 
where $\pi_k$ is the policy used in the $k$th episode with action space $\tilde{\cal D}$.   

Note that \begin{align*}
v^* - \tilde{v}=(v^*-  \bar{V}^{*}_1)+ (\bar{V}^{*}_1-{V}^{\tilde{\pi}^*}_1)+({V}^{\tilde{\pi}^*}_1-\bar{V}_1)+(\bar{V}_1-\tilde{v}).   
\end{align*} We next bound the four terms individually. 

Let $v_{k,1}$ be the cumulative reward received in episode $k$ with the reduced action space and $V^{\pi_k}_1$ be the reward value function. Note that 
$$X_\tau=\sum_{k=1}^\tau \left(v_{k,1}-V_1^{\pi_k}\right)$$ is a Martingale. By Azuma's inequality, we have
\begin{align}
    \Pr \left(\left|\tilde{v} - \bar{V}_1\right| \leq \sqrt{\frac{2H^2\log K^{0.25}}{K^{0.25}}} \right)  \geq 1 - \frac{1}{2K^{0.25}}.\label{eq:azu-1}
\end{align} A similar argument yields that 
\begin{align}
    \Pr \left(\left|v^* - \bar{V}_1^{*}\right| \leq \sqrt{\frac{2H^2\log K^{0.25}}{K^{0.25}}} \right)  \geq 1 - \frac{1}{2K^{0.25}}. \label{eq:azu-2}
\end{align}

We next bound $|\bar{V}^{*}_1-{V}^{\tilde{\pi}^*}_1|$ based on Lemma \ref{triple-q} and the Markov inequality.  First, based on the Markov inequality, we have 
\begin{align}
\Pr\left(|\bar{V}^{*}_1-{V}^{\tilde{\pi}^*}_1|\geq K^{-0.03}\right)
\leq  \frac{\mathbb{E}\left[ |\bar{V}^{*}_1-{V}^{\tilde{\pi}^*}_1|\right]}{K^{-0.03}}
\end{align}
Note that we have
\begin{align}
\mathbb{E}\left[ |\bar{V}^{*}_1-{V}^{\tilde{\pi}^*}_1|\right]=    &\mathbb{E}\left[\left.V^{\tilde{\pi}^*}_1-\bar{V}^*_1\right|V^{\tilde{\pi}^*}_1\geq \bar{V}^*_1\right]\Pr\left(V^{\tilde{\pi}^*}_1\geq \bar{V}^*_1\right) +
\mathbb{E}\left[\left.\bar{V}^*_1-V^{\tilde{\pi}^*}_1\right|V^{\tilde{\pi}^*}_1< \bar{V}^*_1\right]\Pr\left(V^{\tilde{\pi}^*}_1< \bar{V}^*_1\right) \nonumber\\
=&\mathbb{E}\left[V^{\tilde{\pi}^*}_1-\bar{V}^*_1\right] +
2\mathbb{E}\left[\left.\bar{V}^*_1-V^{\tilde{\pi}^*}_1\right|V^{\tilde{\pi}^*}_1< \bar{V}^*_1\right]\Pr\left(V^{\tilde{\pi}^*}_1< \bar{V}^*_1\right) \nonumber\\
\leq &{\frac{c_1K^{0.2}}{K^{0.25}} +2H \Pr \left(V^{\tilde{\pi}^*}_1 < \bar{V}_1\right)}.
\end{align}
Note that when the constraints are satisfied, we have $V^{\tilde{\pi}^*}_1 \geq \bar{V}^*_1.$ Therefore, according Lemma \ref{triple-q}, $\Pr \left(V^{\tilde{\pi}^*}_1 < \bar{V}^*_1\right)={\cal O}(K^{-0.5})$ (note that Triple-Q is used for $K^{0.25}$ episodes instead of $K$ episodes), which implies that  
\begin{align}
\Pr\left(|V^{\tilde{\pi}^*}_1-\bar{V}^*_1|\geq K^{-0.03} \right) ={\cal O}\left(K^{-0.02}\right).
\end{align}
Similarly, we have 
\begin{align}
\Pr\left(|V^{\tilde{\pi}^*}_1-\bar{V}_1|\geq K^{-0.03} \right) ={\cal O}\left(K^{-0.02}\right).
\end{align}

Combining the inequality above with inequalities \eqref{eq:azu-1} and \eqref{eq:azu-2}, we can conclude that 
\begin{align}
\Pr\left(|v^*-\tilde{v}|\leq 4K^{-0.03} \right) =1-{\cal O}\left(K^{-0.02}\right).
\end{align} Based on Lemma \ref{triple-q}'s high probability result on constraint violation and Azuma's inequality for Martingale, we obtain 
\begin{align}
\Pr\left(|v^*-\tilde{v}|\leq 4K^{-0.03}, \tilde{w}^n \geq \tilde{\rho}^{(n)}\ \forall n \right) =1-{\cal O}\left(K^{-0.02}\right).
\end{align}

Now consider the case no optimal policy exists in $\Pi_{\hat{\cal D}}.$ 
Let $\pi''$ be an optimal policy with action space $\hat{\cal D}$
 and suppose all constraints are satisfied under $\pi''.$ Note that $\pi''$ is {\em not} an optimal policy for the tightened CMDP. We first have 
\begin{align*}
v^* - \tilde{v}= 
v^*-  \bar{V}^{*}_1+ \bar{V}^{*}_1-{V}^{\tilde{\pi}^*}_1+{V}^{\tilde{\pi}^*}_1- V^{\pi''}_1+ V^{\pi''}_1-\bar{V}_1+\bar{V}_1-\tilde{v}
\end{align*}
       
Note that ${V}^{\tilde{\pi}^*}_1- V^{\pi''}_1\geq \sigma_{\min}$ because all constraints are satisfied under $\pi''$ under our assumption. 
Similar to the analysis above, we can also obtain 
\begin{align}
\Pr\left(|V^{\tilde{\pi}''}_1-\bar{V}_1|\geq K^{-0.03} \right) ={\cal O}\left(K^{-0.02}\right).
\end{align}
Summarizing the results above,  using the union bound, we conclude that with probability $1-\mathcal{O}\left(K^{-0.02}\right),$ we have
\begin{align*}
v^* - \tilde{v} \geq \sigma_{\min} - 2K^{-0.03}-2\sqrt{\frac{2H^2\log K^{0.25}}{K^{0.25}}}  > \frac{\sigma_{\min}}{2}\geq 4K^{-0.03}   
\end{align*} for sufficiently large $K$ if an optimal policy is not retained. If none of the policies in $\Pi_{\hat{\cal D}}$ can satisfy the constraints, then it can be easily shown that $\tilde{w}^n < \rho^{(n)}$ with probability $1-{\cal O}(K^{-0.25})$ for some $n$. 

In summary, with probability $1 - \mathcal{O}(K^{-0.02})$, we can correctly verify whether an optimal policy in $\hat{\cal D}.$
\end{proof}

\begin{customthm}{2}
When $K$ is sufficiently large, with probability $1-\tilde{\cal O}(K^{-9/8}),$  Policy Pruning outputs 
$\tilde{\cal D}$ such that $\Pi_{\tilde{\cal D}}\bigcap \Pi^*\not=\emptyset$ but $\Pi_{\tilde{\cal D}'}\bigcap \Pi^*=\emptyset$ for any $\tilde{\cal D}'\subset \tilde{\cal D}.$
In other words, no action can be further removed without losing the optimality.  Furthermore, the regret and constraint violation, with respect to the tightened CMDP, during Policy Pruning are bounded by $20H^2SAK^{0.25}\log K$ with probability one.
\end{customthm}

\begin{proof} 
We first analyze the first $\log K$ rounds of running Triple-Q, each round including $K^{0.25}$ episodes. Without loss of generality, consider the first round and $(h',x',a')$ such that $N_{h'}(x',a')< K^{0.2}.$  Now define the set of policies $$\Pi=\left\{\pi_k: \pi_{k,h'}(a'|x')=0\right\},$$ which includes at least $K^{0.25}-K^{0.2}$ policies. None of these policies uses action $a'$ in state $x'$ at step $h'.$ Furthermore, define action space 
$\hat{\cal D}$ such that $\hat{\cal D}_{h,x}={\cal A}$ and $\hat{\cal D}_{h', x'}={\cal A}\setminus \{a'\}.$ Note that $\Pi\subseteq \Pi_{\hat{\cal D}_{h', x'}}.$

Now suppose that $\Pi_{\hat{\cal D}}$ {\em does not} include an optimal policy, then
$\sigma_{\tilde{\cal D}}>0.$ Now consider a mixed policy $\tilde{\pi}$ that sample a policy uniformly at random from $\Pi,$ so $\tilde{\pi}\in \Pi_{\hat{\cal D}}.$

Define $$\bar{V}^*_1=\frac{1}{K^{0.25}} \sum_{k=1}^{K^{0.25}}V_1^{\pi_k}$$ and  
$$\bar{W}^{*,n}_1=\frac{1}{K^{0.25}} \sum_{k=1}^{K^{0.25}}W_1^{\pi_k,n}$$
where $\pi_k$ is the policy used in the $k$th episode under Triple-Q. According to the proof of Lemma \ref{lem:comp} and  the high probability result in Lemma \ref{triple-q}, we have shown that 
\begin{align}
\Pr\left(|V^{\tilde{\pi}^*}_1-\bar{V}^*_1|\leq K^{-0.03}, \tilde{\rho}^{(n)}-  \bar{W}^{*,n}_1\leq 0 \ \forall n\right) =1-{\cal O}\left(K^{-0.02}\right), 
\end{align} which implies that 
\begin{align}
\Pr\left(|V^{\tilde{\pi}^*}_1-{V}^{\tilde{\pi}}_1|\leq K^{-0.03}+HK^{-0.05}, \tilde{\rho}^{(n)}-  \bar{W}^{\tilde{\pi},n}_1\leq  HK^{-0.05}\ \forall n\right) =1-{\cal O}\left(K^{-0.02}\right). 
\end{align}  The inequality holds because the difference of value functions of different policies are bounded by $H.$ The inequality above means that for a sufficiently large $K,$ $$\sigma_{\tilde{\pi}}\leq K^{-0.03}+HK^{-0.05}<\sigma_{\tilde{\cal D}},$$ which leads to the contradiction. We, therefore, can conclude that with probability $1-{\cal O}\left(K^{-0.02}\right),$ $N_{h'}(x',a')\geq K^{0.2}$ if $\Pi_{\hat D}$ does not include an optimal policy. 
 
To improve the probability, the algorithm repeats this process for $4\log K$ rounds and makes a decision on removing $a'$ or not based on the majority rule. Let $Z_k$ denote the decision in the $k$th episode such that $Z_k=1$ if $N_{h'}(x',a')\leq  K^{0.2}$ and $Z_k=0$ otherwise. So $Z_k$ are i.i.d. random variables and 
 $$z_{h'}(x', a')=\sum_{k=1}^{4\log K} Z_k.$$  Consider the case $\{a'\}$ cannot be removed. In this case, $\mathbb{E}[Z_k]=\Pr\left(Z_k=1\right)=\mathcal{O}(K^{-0.02}).$
Consider a sufficiently large $K$ such that $\Pr\left(Z_k=1\right)\leq \frac{1}{8}.$ According to Hoeffding's inequality,
    \begin{align}
     &\Pr\left(z\geq 2{\log K}\right)\\
     =&\Pr\left(z-\mathbb{E}[z]\geq 2{\log K}-\mathbb{E}[z]\right)\\
     \leq &\Pr\left(z-\mathbb{E}[z]\geq \frac{3\log K}{2}\right)\\
     \leq& e^{-\frac{9}{8}\log K}=\frac{1}{K^{\frac{9}{8}}}. 
     \end{align} So in this case, $a'$ is not be removed with probability $1-K^{-\frac{9}{8}}.$ 
Using the union bound, we can conclude that after the $4\log K$ rounds of Policy Pruning, $\Pi_{\tilde{\cal D}}$ includes an optimality policy with probability $1-{\cal O}(K^{-\frac{9}{8}}).$

After the initial $4\log K$ rounds, the algorithm checks the remaining actions one by one to see whether the action can be removed by using Algorithm \ref{alg:comp}. Note that according to Lemma \ref{lem:comp}, Algorithm \ref{alg:comp} makes a correct decision with probability $1-{\cal O}\left(K^{-0.02}\right).$ Following the analysis above, with $4\log K$ comparisons, the decision on $a'$ is correctly made with a probability of at least $1-{K^{-\frac{9}{8}}}.$ If action $a'$ cannot be removed, the algorithm checks whether using $a'$ alone is sufficient by using Algorithm \ref{alg:comp} $4\log K$ times. Again, the decision can be correctly with a probability at least $1-{K^{-\frac{9}{8}}}$. 

After the algorithm goes through all actions, with probability $1-{\cal O}(K^{-\frac{9}{8}}),$ we obtain action space $$\otimes_{(h,x)}\tilde{\cal D}_{h,x}$$ such that none of the stochastic decision can be reduced to a greedy decision without losing optimality. 
It is easy to verify that the regret and constraint violation of multiple solution pruning are bounded by $20H^2SAK^{0.25}\log K$ because it takes at most $20HSAK^{0.25}\log K$ episodes to finish this phase. 
\end{proof}

\section{Proof of Theorem \ref{thm:refinement} (Refinement)}\label{sub2}
\begin{customthm}{3}
The regret and constraint violation, with respect to the tightened CMDP, during the policy refinement phase, are both ${\cal O}(H\sqrt{K}\log K).$ 
\end{customthm}
\begin{proof}
Consider the action space $\tilde{\cal D}$ after Policy Pruning and the LP for the tightened CMDP with action space $\tilde{\cal D}.$  From Theorem \ref{the:multi}, we have that with probability $1-\tilde{\cal O}(K^{-\frac{9}{8}})$, $\Pi_{\tilde{\cal D}}\bigcap \Pi^*\not=\emptyset.$ Assume it occurs, let $\pi_q\in \Pi_{\tilde{\cal D}}\bigcap \Pi^{*,e}.$ Note that $q$ exists because one of the optimal solutions has to be an extreme point. This also means that 
${\cal D}(q)=\tilde{\cal D}$ because otherwise $\Pi_{{\cal D}(q)}\bigcap \Pi^*=\emptyset$ according to Theorem \ref{the:multi}, which contradicts the fact that $\pi_q$ is an optimal policy. Now from Lemma \ref{le:spar}, we conclude that 
$\tilde{\cal D}$ allows at most $N$ stochastic decisions, and furthermore, at most $2^N$ distinct policies can be constructed from $\tilde{\cal D}.$

We first assume the high probability event in Theorem \ref{the:multi} occurs. Recall that in Lemma \ref{lem:decom}, we have shown that there exists a mixed policy of $M\leq 2^N$ greedy policies defined by $\tilde{\cal D}_{h,x}$ that has the same occupancy measure as that under the optimal policy (for the tightened problem), and $\{\alpha_m^*\}$ are the associated weights.  

Recall that the policy refinement consists of $\sqrt{K}$ rounds. Let $\{\alpha_{t,m}\}_{m=1, \cdots, M}$ be the weights used in round $t.$  Then in the $t$th round, greedy policy $\pi^m$ is used for $\alpha_{t,m}\sqrt{K}$ episodes, where $\alpha_{t,m}$ is the optimal solution to Decomposition-Opt \eqref{decomp-opt}.

First, we will bound the estimation errors of the reward and utility value functions. Recall that PRI uses  $\epsilon' \sqrt{K}$ episodes in each round to estimate the reward value function and the utility value functions instead of all episodes because $\{\alpha_{t,m}\}$ are random variables correlated with the estimated value functions from the previous round. At the end of round $t,$ we have $t\epsilon' \sqrt{K}$ samples from the previous round. Indexing the samples by $k',$ we have  

\begin{align}
    \bar{W}^{\pi^m,n}_1 = \frac{ \sum_{k' = 1}^{t\epsilon' \sqrt{K} }W^{\pi^m,n}_{k',1}}{t\epsilon' \sqrt{K}}.
\end{align}
Define
\begin{align}
    \delta W_1^{\pi^m, n} &=\bar{W}^{\pi^m,n}_1-{W}^{\pi^m,n}_1 \\
    &=\frac{ \sum_{k' = 1}^{t\epsilon' \sqrt{K} }\left(W^{\pi^m,n}_{k',1}-W_1^{\pi^m, n}\right)}{t\epsilon' \sqrt{K}}. \label{eq13}
\end{align}

Since $W^{\pi^m,n}_{k',1}\in[0, H]$ are i.i.d. random variables, by the Azuma-Hoeffding inequality,  we have

\begin{align}
   &\Pr\left( \left|\delta W_1^{\pi^m, n}\right|  \leq \sqrt{\frac{2H^2\log\left(t\epsilon' \sqrt{K}\right)}{\epsilon' t\sqrt{K}}}\right)\\
   \geq &1-\frac{2}{t\epsilon' \sqrt{K}}.\label{eq7}
\end{align}
Similarly, defining $$\delta V_1^{\pi^m} = \bar{V}^{\pi^m}_1-{V}^{\pi^m}_1 =\frac{ \sum_{k' = 1}^{t\epsilon' K^\alpha }\left(V^{\pi^m}_{k',1}-V_1^{\pi^m}\right)}{t\epsilon' \sqrt{K}},$$ we have 
\begin{align}
   &\Pr\left( \left|\delta V_1^{\pi^m}\right|  \leq \sqrt{\frac{2H^2\log\left(t\epsilon' \sqrt{K}\right)}{\epsilon' t\sqrt{K}}}\right)\\
   \geq &1-\frac{2}{t\epsilon' \sqrt{K}}. \label{eq:high-prob-regret}
\end{align}

Therefore, with probability at least $1-\frac{2M}{t\epsilon' \sqrt{K}},$
\begin{align*}
    &\left|\sum_{m=1}^{M} \alpha_m^* \bar{W}^{\pi_m, n}_1 - \tilde{\rho}^{(n)}\right|\\
    =&\left|\sum_{m=1}^{M} \alpha_m^* \left({W}^{\pi_m, n}_1 +\delta {W}^{\pi_m, n}_1\right)- \tilde{\rho}^{(n)}\right|\\  \leq &\sum_{m=1}^{M} \alpha_m^* \left| \delta W^{\pi_m, n}_1 \right|\\
     \leq& \sqrt{\frac{2H^2\log\left(t\epsilon' \sqrt{K}\right)}{\epsilon' t\sqrt{K}}}
\end{align*}
In other words, $\{a_m^*\}$ is a feasible solution to Decomposition-Opt \eqref{decomp-opt} with a high probability, which implies that Decomposition-Opt \eqref{decomp-opt} has a solution with a high probability. 

We now consider $\{a_{t,m}\}$ and the regret and constraint violation in round $t.$ If $\{a_m^*\}$ is a feasible solution to Decomposition-Opt \eqref{decomp-opt}, then   
\begin{align*}
    &\sum_{m=1}^{M} \alpha_{t,m} \sqrt{K} V^{\pi^m}_1\\
    =&\sum_{m=1}^{M} \alpha_{t,m} \sqrt{K} \left(\bar{V}^{\pi^m}_1-\delta{V}^{\pi^m}_1\right)\\
    \geq &\sqrt{K} \left(\sum_{m=1}^{M} \alpha_{t,m} \bar{V}^{\pi^m}_1\right)- \sqrt{K} \max_m |\delta{V}^{\pi^m}_1|\\
    \geq_{(a)} &\sqrt{K} \left(\sum_{m=1}^{M} \alpha_{m}^* \left({V}^{\pi^m}_1+\delta{V}^{\pi^m}_1\right)\right)- \sqrt{K} \max_m |\delta{V}^{\pi^m}_1|\\
    \geq &\sqrt{K} \left(\sum_{m=1}^{M} \alpha_{m}^* {V}^{\pi^m}_1\right)- 2\sqrt{K} \max_m |\delta{V}^{\pi^m}_1|\\
    =& \sqrt{K} V^{\pi^*}_1- 2\sqrt{K} \max_m |\delta{V}^{\pi^m}_1|\\
    \geq &\sqrt{K}\left(V^{\pi^*}_1 - 2\sqrt{\frac{2H^2\log\left(t\epsilon' \sqrt{K}\right)}{\epsilon' t\sqrt{K}}}\right),
\end{align*} where $(a)$ holds because $\{a_{t,m}\}_m$ is the optimal solution to Decomposition-Opt \eqref{decomp-opt}.
In other words, with a high probability, the regret is bounded by 
\begin{align}
2\sqrt{\frac{2\sqrt{K}H^2 \log \left( t\sqrt{K}\right)}{\epsilon' t}} .\label{eq8}
\end{align}
Thus, with probability 
\begin{align}
\prod_{t = 2}^{\sqrt{K}}\left(1 - \frac{2}{t\epsilon' \sqrt{K}}\right)\geq 1 - \frac{2\log K}{\epsilon' \sqrt{K}}
\end{align}
regret in round $t$ is bounded by \eqref{eq8} for all $t.$ Therefore, the regret during policy refinement is bounded by 
\begin{align*}
  &2H\sqrt{K}+\sum_{t=2}^{\sqrt{K}-1} 2\sqrt{\frac{2\sqrt{K}H^2 \log \left( t\sqrt{K}\right)}{\epsilon' t}} \\
  \leq &2H\sqrt{K}+2\sqrt{\frac{2KH^2\log K}{\epsilon'}} \sum_{t=2}^{\sqrt{K}-1} \sqrt{\frac{1}{t\sqrt{K}}}\\
    \leq &2H\sqrt{K}+2\sqrt{\frac{2\sqrt{K}H^2\log K}{\epsilon'}} \int_{t = 1}^{\sqrt{K}} \sqrt{\frac{1}{t}}dt\\
    \leq &2H\sqrt{K}+2\sqrt{\frac{2KH^2\log K}{\epsilon'}} \\
    =&{\cal O}(H\sqrt{K}\log K).
\end{align*} Recall that $\epsilon'=\frac{1}{\log K}.$
The analysis is the same for the constraint violation. We now have shown that during the refinement phase, if the high probability event in Theorem \ref{the:multi} occurs, then the regret and constraint violation, with respect to the tightened CMDP, are both ${\cal O}(H\sqrt{K}\log K)$. If the high probability event does not occur, the regret and constraint violation are bounded by $HK$ since the refinement phase includes $K$ episodes. Combining the two cases together, we have that the regret and constraint violation are bounded by 
$$(1-\tilde{\cal O}(K^{-\frac{9}{8}}))\times {\cal O}(H\sqrt{K}\log K)+\tilde{\cal O}(K^{-\frac{9}{8}})\times HK={\cal O}(H\sqrt{K}\log K).$$
\end{proof}

\section{Proof of Theorem \ref{thm:ide} (Identification)}\label{sub3}
\begin{customthm}{4}
The regret and constraint violation, with respect to the tightened CMDP, during the policy identification phase are $\mathcal{O}(H\sqrt{K}\log K)$. 
Furthermore,   with probability $1-\tilde{\cal O}\left(\frac{1}{\sqrt{K}}\right)$, PRI yields policy $\tilde{\pi}$ such that 
 with $V_1^{\tilde{\pi}} = \tilde{V}_1^*-{\cal O}\left({{\frac{H\log K}{\sqrt{K}}}}\right)$ and $W_1^{\tilde{\pi},n} \geq \rho^{(n)}+\epsilon_\rho-{\cal O}\left({{\frac{H\log K}{\sqrt{K}}}}\right)$ for all $n,$ where $\tilde{V}^*_1$ is the solution to the tightened CMDP. 
The algorithm also has no more than $N$ stochastic decisions. \end{customthm}
\begin{proof}
First, consider the case such that the high probability event defined in Theorem \ref{the:multi} occurs. Consider the $\{a_m\}$ obtained at the end of the refinement phase, and the mixed policy $\hat\pi$ defined by $\{a_m\}.$ According to the proof of Theorem \ref{thm:refinement}, we have with probability $1-{\cal O}(K^{-1}),$   
\begin{align}
    V_1^{\hat \pi}&=\sum_{m=1}^{M} \alpha_{m} V^{\pi^m}_1 \nonumber \\
    &\geq \left(V^{\tilde{\pi}^*}_1 - 2\sqrt{\frac{2H^2\log\left(\epsilon' {K}\right)}{\epsilon' K}}\right)\label{eq:phase3-v}\\
        W_1^{\hat \pi,n}&=\sum_{m=1}^{M} \alpha_{m} W^{\pi^m,n}_1 \nonumber \\
    &\geq \left(W^{\tilde{\pi}^*,n}_1 - 2\sqrt{\frac{2H^2\log\left(\epsilon' {K}\right)}{\epsilon' K}}\right) \quad \forall n.\label{eq:phase3-w}
\end{align} Therefore, the regret and constraint violation during the identification phase, which includes $K$ episodes, are both bounded by 
$${\cal O}(H\sqrt{K}\log K)\times (1-{\cal O}(K^{-1}))\times (1-{\cal O}(K^{-\frac{9}{8}})) + K\times \left(1- (1-{\cal O}(K^{-1}))\times (1-{\cal O}(K^{-\frac{9}{8}})\right)={\cal O}(H\sqrt{K}\log K).$$

For any $(h,x,a)$ such that $0<q^{\pi^m}_h(x,a),$
$$\mathbb{E}\left[\sum_{k=1}^{a_m K} \mathbb{I}(x_{k,h}=x, a_{k,h}=a)\right]=\alpha_m q^{\pi^m}_h(x,a)K,$$ which implies that 
\begin{align*}
    &\Pr\left(\left|{\sum_{k=1}^{a_m K} \mathbb{I}_{(x_{k,h}=x, a_{k,h}=a)}}-\alpha_mq^{\pi^m}_h(x,a)K\right|\leq \sqrt{K\log K} \right)\\
    &= 1-{\cal O}\left(\frac{1}{K}\right)
\end{align*} according to the Azuma-Hoeffding inequality. Define event
\begin{align}
    \Phi = \left\{\left|\frac{1}{K}\sum_{k=1}^{K} \mathbb{I}_{((x_{k,h},a_{k,h})=(x,a))}-\sum_m \alpha_mq^{\pi^m}_h(x,a)\right|\right.\nonumber \\
    \left.\leq M\sqrt{\frac{\log K}{K}}\right\}.
\end{align} We have
\begin{align*}
&\Pr\left(\Phi \right) = 1-{\cal O}\left(\frac{1}{K}\right).
\end{align*} 
Define $\tilde{q}_h(x,a)=\frac{N_h(x,a)}{K},$ which is the empirical occupant measure under policy $\hat{\pi}.$ Note that $$N_h(x,a)={{\sum_{k=1}^{K} \mathbb{I}(x_{k,h}=x, a_{k,h}=a)}}$$ and  $$q^{\hat \pi}_h(x,a)=\sum_m \alpha_mq^{\pi^m}_h(x,a).$$ Therefore, we have with probability $1-{\cal O}(1/K),$
$$\|\tilde{q} -q^{\hat{\pi}}\|_\infty={\cal O}\left(\sqrt{\frac{\log K}{K}}\right),$$ which implies that 
$$\|\tilde{\pi}-\hat{\pi}\|_\infty={\cal O}\left(\sqrt{\frac{\log K}{K}}\right)$$ where 
$$\tilde{\pi}_h(a|x)=\frac{\tilde{q}_h(x,a)}{\sum_{a\in \tilde{\cal D}(h,x)} \tilde{q}_h(x,a)}.$$ We can then  furthermore conclude that  
$$\|{q}^{\hat{\pi}} -q^{\hat{\pi}}\|_\infty={\cal O}\left(\sqrt{\frac{\log K}{K}}\right),$$ and 
$$|V_1^{\tilde{\pi}}-V_1^{\hat{\pi}}|={\cal O}\left(\sqrt{\frac{\log K}{K}}\right)\quad\hbox{and}\quad |W_1^{\tilde{\pi},n}-W_1^{\hat{\pi},n}|={\cal O}\left(\sqrt{\frac{\log K}{K}}\right) \ \forall n$$ which holds because the value functions are the linear functions of the occupancy measure. The theorem holds by combining the results above with inequalities (\ref{eq:phase3-v}) and (\ref{eq:phase3-w}).
\end{proof}

\section{Extension to Constraint Violation without Episode-Wise Cancellation}
\label{appendix:violation}
While the constraint violation defined in (\ref{eq:violation}) allows the cancellation across episodes, we can guarantee $O(\sqrt{K}\log K)$ violation when the cancellation is not allowed, as defined in (5) in \cite{EfrManPir_20}, by making some minor modification to the algorithm. In particular, in the policy refinement and identification phases, instead of using the $M$ greedy policy in a round-robin fashion, we can use a mixed policy that chooses policy $m$ with probability $\alpha_m.$ With that modification, we will   have
\begin{itemize}
    \item The Pruning phase consists at most $20H^2SAK^{0.25}\log K$ episodes, so resulting in at most $\tilde{\cal O}(H^2SAK^{0.25})$ violation. 

    \item During the refinement phase, we have shown that with a high probability, the mixed policy used in the $t$th iteration has a reward gap of ${\cal O}\left(H\frac{\log K}{\sqrt{t\sqrt{K}}}\right)$ for $t\geq 2.$ The same result holds for constraint violation. The $t$th iteration includes $\sqrt{K}$ episodes and the refinement phase includes $\sqrt{K}$ iterations. Therefore, with a high probability, the total violation without canceling across episodes is ${\cal O}(H\sqrt{K}\log K).$

    \item The mixed policy used in the policy identification phase has a constraint violation bound of ${\cal O}\left(\frac{\log K}{\sqrt{K}}\right)-\epsilon_\rho<0$ for a sufficiently large $K.$ So the violation is zero with a high probability. 
\end{itemize}
Therefore, the total violation is bounded by $\tilde{\cal O}(H\sqrt{K})$ by using definition (5) in \cite{EfrManPir_20}.

\section{$\Omega(H\sqrt{K})$ lower bound with $\sigma_{\min}>0$}
\label{app:lb}
While in spirit, $\sigma_{\min}$ is similar to the reward gap in bandits, we need to point out that for safe linear bandits, it has been shown in the literature \cite{CheGanSal_22} that logarithmic bounds are impossible even when $\sigma_{\min}>0$ and independent of $K.$ We next present an $H\sqrt{K}$ lower bound for CMDPs, using an example inspired by the impossibility example in Section 4 of \cite{CheGanSal_22}.

Consider a CMDP with four states $\{0, 1, 2, 3\}$, two actions $\{u, v\},$ and $x_1=0.$ When action $u$ is taken at step 1, the MDP moves to state $1$ and the agent receives zero reward and zero utility. After step 1, if the MDP is in state 1, it remains in state 1 regardless of the action and the agent receives reward 1 and zero utility at each step. 

When action $v$ is taken at step 1, the MDP moves to state 2 with probability $p$ and state 3 with probability $1-p.$ The agent receives zero reward and zero utility. After step 1, if the agent is in state $2,$  it remains in state 2 regardless of the action and the agent receives zero reward and utility 1 at each step.  If the agent is in state $3,$  it remains in state 3 independent of the action and the agent receives zero reward and utility $0.5$ at each step. 
The CMDP is 
\begin{align*}
&\max_{\{\pi_k\}} \mathbb{E}\left[\sum_{k=1}^K\sum_{i=1}^H r_{k,i}(x_{k,i}, a_{k,i})\right]\\
\hbox{subject to: } &  \mathbb{E}\left[\sum_{k=1}^K\sum_{i=1}^H g_{k,i}(x_{k,i}, a_{k,i})\right] \geq  \frac{H-1}{3}.
\end{align*}
Since the reward and utility received after step 2 are action-independent and the initial state is fixed, the problem can be simplified to decide the action at the first step:  
\begin{align*}
&\max_{\pi_v} (H-1)(1-\pi_v)\\
\hbox{subject to: } & \pi_v(H-1)\left(\frac{1}{2}+\frac{p}{2}\right)\geq \frac{H-1}{3}
\end{align*} where $\pi_v$ is the probability of taking action $v$ at step 1. Note that $p$ is unknown and needs to be learned from samples.  

It is straightforward to see that the optimal solution to the problem above is $\pi^*_v=\frac{2}{3(1+p)}$ and $\sigma_{\min}=(H-1)\min\left\{1-\frac{2}{3(1+p)},\frac{1}{2}\right\}\geq \frac{H-1}{3}$ where the first term is the reward gap by restricting the action space to $\{v\}$ and the second term is the constraint violation when restricting the action space to $\{u\}.$ It is also easy to see that the associated LP satisfies Slater's condition (e.g., choosing $\pi_v=0.9.$)

We now consider two instances $p=0.5$ and $p=0.5+\kappa.$ 
We now consider an online algorithm without knowing which instance it is. We assume the online algorithm chooses policy $\pi_{k,v}\in\left[\frac{2}{3(1.5+\kappa)}, \frac{2}{4.5}\right]$ at each step because the optimal policies for the two instances belong to this interval. 

Given an online learning algorithm, 
the regret is 
\begin{align*}
&(H-1)K\left(1-\frac{2}{3(1+p)}\right)-\mathbb{E}_p\left[\sum_{k=1}^K\sum_{i=1}^H r_{k,i}(x_{k,i}, a_{k,i})\right]\\
=&(H-1)K\left(1-\frac{2}{3(1+p)}\right)-(H-1)\mathbb{E}_p\left[\sum_{k=1}^K 1-\mathbb{I}(a_{k,1}=v)\right]\\
=&(H-1)\left(\sum_{i=1}^K \mathbb{E}_p\left[\pi_{k,v}\right]-\frac{2K}{3(1+p)}\right) ,   
\end{align*} where $\mathbb{E}_p$ denotes the expectation for the instance with parameter $p,$
 and the constraint violation is 
 \begin{align*}
&\mathbb{E}_p\left[\sum_{k=1}^K\left(\frac{H-1}{3}-\sum_{i=1}^H g_{k,i}(x_{k,i}, a_{k,i})\right)\right]\\
=&\frac{(H-1)K}{3}-\sum_{k=1}^K \mathbb{E}_p\left[\mathbb{E}_p\left[\left.\sum_{i=1}^H g_{k,i}(x_{k,i}, a_{k,i})\right|a_{k,1}\right]\right]\\
=&\frac{(H-1)K}{3}-\sum_{k=1}^K \mathbb{E}_p\left[\mathbb{I}(a_{k,1}=v)(H-1)(\frac{1}{2}+\frac{p}{2})\right]\\
=&{(H-1)}\left(\frac{K}{3}-\frac{1+p}{2} \sum_{k=1}^K \mathbb{E}_p\left[\pi_{k,v}\right] \right).
\end{align*}

Now choose $\kappa=1/\sqrt{K}$ and define event $\Phi=\left\{\sum_{k=1}^K \mathbb{I}( \pi_{k,v}\geq \frac{2}{3(1.5+\frac{1}{2\sqrt{K}})})\geq \frac{K}{2}\right\}.$ Considering instance 2, we have
\begin{align*}
\hbox{regret}_2 \geq &  (H-1) \frac{K}{2}\left(\frac{2}{3(1.5+\frac{1}{2\sqrt{K}})}- \frac{2}{3(1.5+\frac{1}{\sqrt{K}})}\right)P_{0.5+1/\sqrt{K}}(\Phi)\\
=& \frac{(H-1)\sqrt{K}}{6(1.5+\frac{1}{2\sqrt{K}})(1.5+\frac{1}{\sqrt{K}})}P_{0.5+1/\sqrt{K}}(\Phi).
\end{align*}
Considering instance 1, we have $$ \hbox{violation}_1 \geq (H-1)\frac{K}{2}\left(\frac{1}{3}- \frac{3}{4} \frac{2}{3(1.5+\frac{1}{2\sqrt{K}})}\right)P_{0.5}(\Phi^c)=\frac{(H-1)\sqrt{K}}{6(3+\frac{1}{\sqrt{K}})}P_{0.5}(\Phi^c).$$
Therefore, we have for sufficiently large $K,$
\begin{align*}
\hbox{violation}_1+\hbox{regret}_2 \geq    \frac{(H-1)\sqrt{K}}{20}\left(P_{0.5+1/\sqrt{K}}(\Phi)+P_{0.5}(\Phi^c)\right).
\end{align*} Invoking the Bretagnolle-Huber inequality (Theorem 14.2 in \cite{LatSze_20}), we have 
\begin{align*}
\hbox{violation}_1+\hbox{regret}_2 \geq&    \frac{(H-1)\sqrt{K}}{40}\exp\left(D(P_{0.5}\|P_{0.5+1/\sqrt{K}})\right)\\
=&  \frac{(H-1)\sqrt{K}}{40}\exp\left(K\left(0.5\log \frac{0.5}{0.5+1/\sqrt{K}}+0.5\log\frac{0.5}{0.5-1/\sqrt{K}}\right)\right)\\
=&  \frac{(H-1)\sqrt{K}}{40}\exp\left(-\frac{K}{2}\log (1-4/{K})\right)\\
\geq & \frac{e^2}{40}(H-1)\sqrt{K},
\end{align*} where the last inequality holds when $K>4$ and is based on fact that $\log(1+x)\leq x$ for $x>-1.$

Therefore, under any online learning algorithm, at least one of the instances has $\Omega(H\sqrt{K})$ regret or violation even when both instances have $\sigma_{\min}=\frac{H-1}{3}.$ As pointed out in  \cite{CheGanSal_22}, this impossibility result is because of the precision required in the optimal solution, i.e., the optimal solution needs to find the exact randomization, which is very sensitive to estimation error, which is fundamentally different from unconstrained problems where we can show logarithmic regret if the instance is well separated, i.e., $\sigma_{\min}>0$ and independent of $K.$ This lower bound matches the upper bound up to a polylogarithmic factor.

\section{Simulations}

\subsection{Synthetic CMDP} \label{app:sim-mdp}
In the systehtic CMDP, we choose $\vert \cS\vert = 3,\vert \cA\vert=3,H=3.$ The detailed parameters of the CMDP in the first experiment are shown in Table \ref{tab:tran}, \ref{tab:re_r} and \ref{tab:re_u}.

We executed the pruning phase (Triple-Q) over $100,000$ episodes and repeated it for $10$ times, followed by the refinement phase and identification phase over $7,000,000$ episodes for each. Since the problem has only one constraint, the optimal policy has only one stochastic decision, which can be decided by evaluating the frequencies of two greedy policies. The identified policy is shown in Table \ref{tab:ide}. We can see that the policy only takes one stochastic decision. The cumulative reward and cumulative utility we get for our learned policy are $1.561$ and $2.016,$ which is close to the optimal solution $1.573$ and $2.$

\begin{table*}[!ht]
    \centering
    \caption{Transition Kernels (the rows represent (previous state, action) and the columns represent (step, next state)).}
    \resizebox{\textwidth}{14mm}{
    \begin{tabular}{|c|c|c|c|c|c|c|c|c|c|}
        \hline
        & (1,1) & (1,2) & (1,3) & (2,1) & (2,2) & (2,3) & (3,1) & (3,2) & (3,3)\\
        \hline
        (1,1) & 0.3112981 & 0.35107633 & 0.27041442 & 0.42626645 & 0.04822746 & 0.14663183 & 0.4031534 & 0.19783729 & 0.39831431\\
        \hline
        (1,2) & 0.23314339 & 0.32491141 & 0.48360071 & 0.24246185 & 0.19021328 & 0.43972054 & 0.26457139 & 0.21435897 & 0.26256243 \\
        \hline
        (1,3) & 0.45555851 & 0.32401226 & 0.24598487 & 0.3312717 & 0.76155926 & 0.41364763 & 0.33227521 & 0.58780374 & 0.33912326\\
        \hline
        (2,1) & 0.32676574 & 0.35320112 & 0.1300059 & 0.35453348 & 0.32114495 & 0.40817113 & 0.1762648 & 0.30097191 & 0.48437535\\
        \hline
        (2,2) & 0.11092341 & 0.28034838 & 0.45655888 & 0.23441632 & 0.2847394 & 0.235718 & 0.17239783 & 0.37273618 & 0.08000908\\
        \hline
        (2,3) & 0.56231085 & 0.3664505 & 0.41343525 & 0.4110502 & 0.39411565 & 0.35611087 & 0.65133738 & 0.32629191 & 0.43561556\\
        \hline
    \end{tabular}} \label{tab:tran}
\end{table*}

\begin{table*}[!ht]
    \centering
    \caption{Rewards (the rows represent (state, action) and the columns represent step.)}
    \resizebox{\textwidth}{8mm}{
    \begin{tabular}{|c|c|c|c|c|c|c|c|c|c|}
        \hline
        & (1,1) & (1,2) & (1,3) & (2,1) & (2,2) & (2,3) & (3,1) & (3,2) & (3,3)\\
        \hline
        1 & 0.5507979 & 0.70814782 & 0.29090474 & 0.51082761 & 0.89294695 & 0.89629309 & 0.12558531 & 0.20724388 & 0.0514672\\
        \hline
        2 & 0.44080984 & 0.02987621 & 0.45683322 & 0.64914405 & 0.27848728 & 0.6762549 & 0.59086282 & 0.02398188 & 0.55885409\\
        \hline
        3 & 0.25925245 & 0.4151012 & 0.28352508 & 0.69313792 & 0.44045372 & 0.15686774 & 0.54464902 & 0.78031476 & 0.30636353\\
        \hline
    \end{tabular}} \label{tab:re_r}
\end{table*}

\begin{table*}[!ht]
    \centering
    \caption{Utilities (the rows represent (state, action) and the columns represent step. )}
    \resizebox{\textwidth}{8mm}{
    \begin{tabular}{|c|c|c|c|c|c|c|c|c|c|}
        \hline
       & (1,1) & (1,2) & (1,3) & (2,1) & (2,2) & (2,3) & (3,1) & (3,2) & (3,3)\\
        \hline
        1 & 0.22195788 & 0.38797126 & 0.93638365 & 0.97599542 & 0.67238368 & 0.90283411 & 0.84575087 & 0.37799404 & 0.09221701\\
        \hline
        2 & 0.6534109 & 0.55784076 & 0.36156476 & 0.2250545 & 0.40651992 & 0.46894025 & 0.26923558 & 0.29179277 & 0.4576864\\
        \hline
        3 & 0.86053391 & 0.5862529 & 0.28348786 & 0.27797751 & 0.45462208 & 0.20541034 & 0.20137871 & 0.51403506 & 0.08722937\\
        \hline
    \end{tabular}}\label{tab:re_u}
\end{table*}

\begin{table*}[!ht]
    \centering
    \caption{Identified Policy $\tilde{\pi}_h(a|x)$ (the rows represent (state, action) and the columns represent step.)}
    \begin{tabular}{|c|c|c|c|c|c|c|c|c|c|}
        \hline
       & (1,1) & (1,2) & (1,3) & (2,1) & (2,2) & (2,3) & (3,1) & (3,2) & (3,3)\\
        \hline
        1 & 0 & 0.059305 & 0.940695 & 0 & 0 & 1 & 1 & 0 & 0\\
        \hline
        2 & 1 & 0 & 0 & 0 & 0 & 1 & 0 & 0 & 1\\
        \hline
        3 & 1 & 0 & 0 & 1 & 0 & 0 & 0 & 1 & 0\\
        \hline
    \end{tabular}\label{tab:ide}
\end{table*}

\subsection{Grid World} \label{sec:app-grid}

\begin{figure}[!ht]
    \centering
    \includegraphics[scale = 0.3]{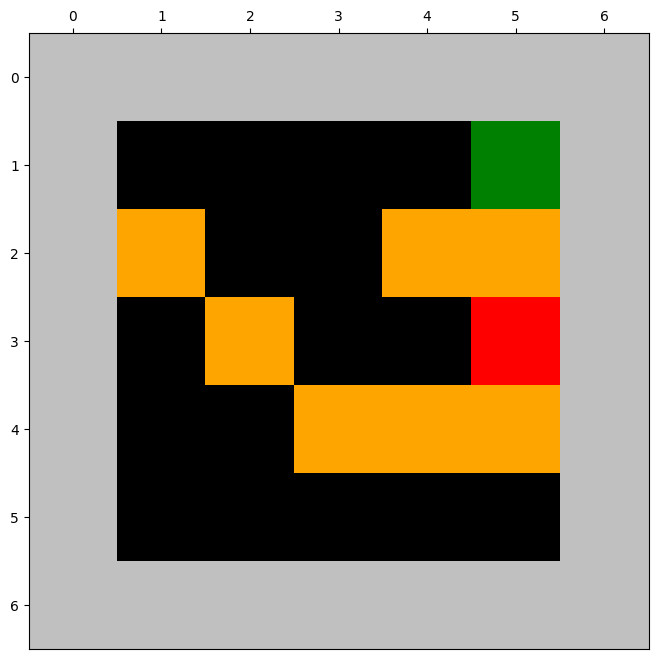}
    \caption{Grid World}
    \label{fig:env}
\end{figure}

As shown in Figure \ref{fig:env}, the task of the agent is to go from the red grid point to the green grid point.  The black grid points are the {\em safe} points over which the agent can move, and the yellow grid points are obstacles. Moving over an obstacle incurs a penalty of one. The constraint is that the agent can incur only an average cost of $0.5$ or less.  The agent can take six steps at maximum. The reward associated with reaching the destination is $1,$ and the rewards for other locations, after six steps, are the Euclidean distance from the location to the destination (normalized by the longest distance). At each grid point, the agent has five actions to choose from: up, down, left, right, and stay, except at the boundary. The goal is to maximize the reward subject to the constraint.

During the experiment, we observed that policy pruning is much more efficient than the theoretical worst case. For this specific environment, the optimal policy should have $6\times 5\times 5 + 1= 155$ nonzero $\pi^*_h(a|x)$'s. After the first phase (Triple-Q), we have roughly $200$ (step, state, action) triples (here "roughly" considers the difference among different trials with different random seeds), associated with stochastic decisions to check and prune. Except for the two ``necessary'' decisions, which are stochastic decisions in the optimal policy, for all trials, the algorithm only checked two candidate triples and eliminated the rest candidate triples in the process. The identified policy is shown in Figure \ref{fig:policy_1} to \ref{fig:policy_6}. The arrow means what direction the agent will go and circle means the agent prefers to stay in the current state.

\begin{figure}[!ht]
\centering
   \begin{subfigure}[b]{0.33\textwidth}
     \centering
    \includegraphics[scale = 0.25]{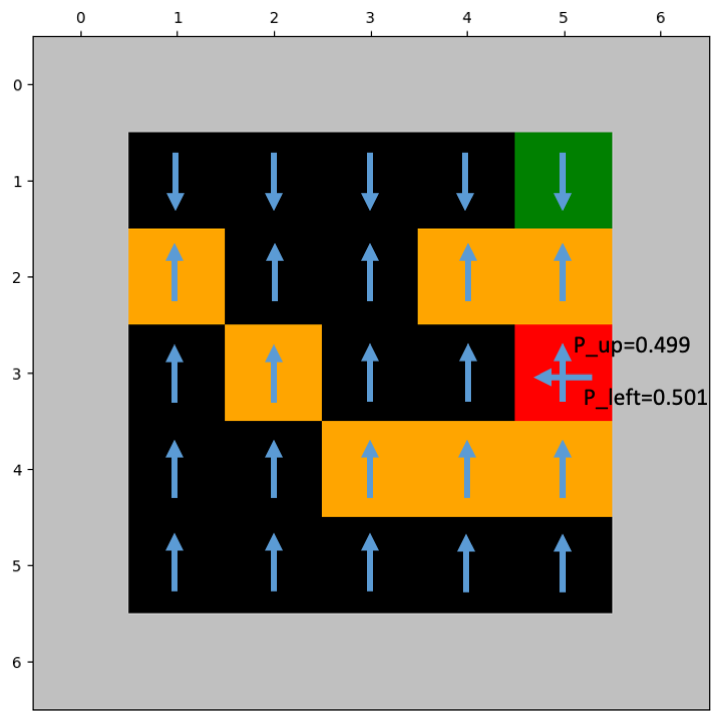}
    \caption{identified policy when h = 1}
    \label{fig:policy_1}
     \end{subfigure}
    \begin{subfigure}[b]{0.33\textwidth}
  \centering
    \includegraphics[scale = 0.25]{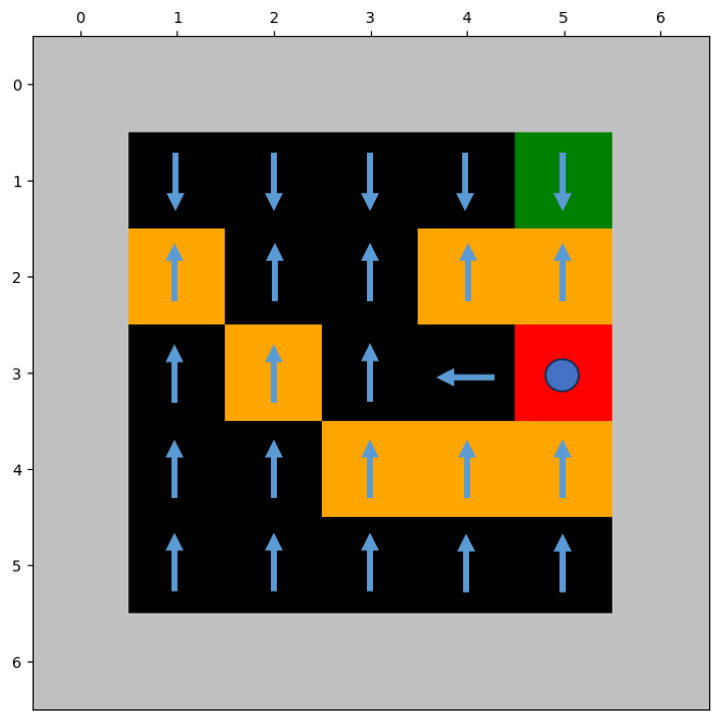}
    \caption{identified policy when h = 2}
     \end{subfigure}
     \begin{subfigure}[b]{0.33\textwidth}
            \centering
    \includegraphics[scale = 0.25]{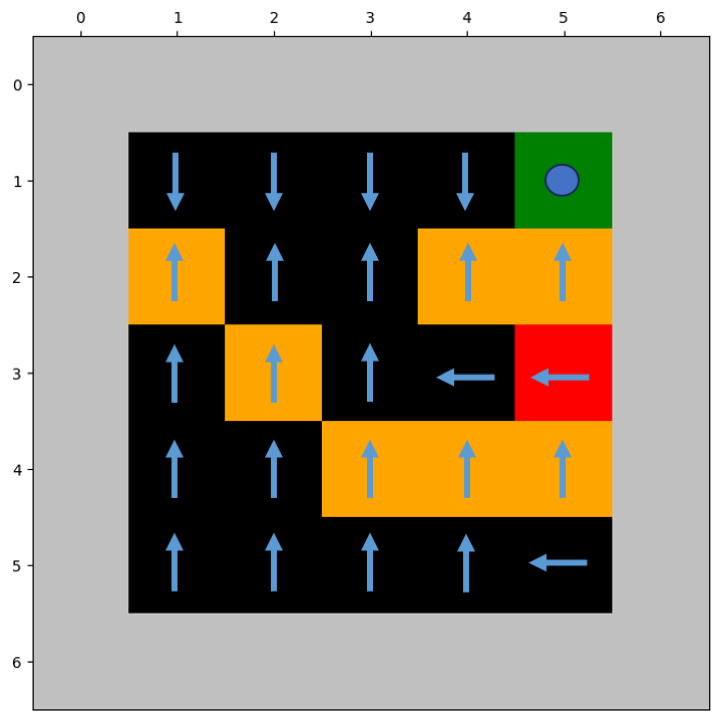}
    \caption{identified policy when h = 3}
     \end{subfigure}
     \begin{subfigure}[b]{0.33\textwidth}
     \centering
    \includegraphics[scale = 0.25]{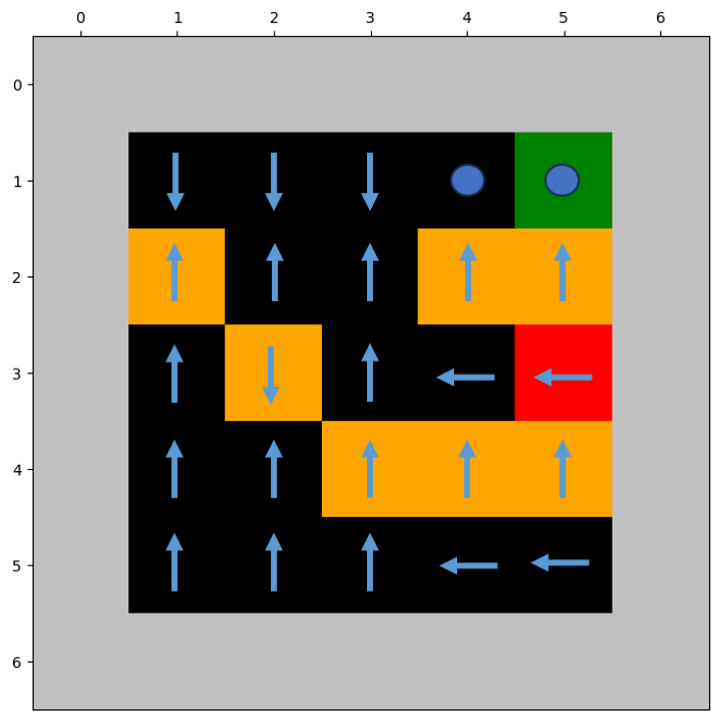}
    \caption{identified policy when h = 4}
    \end{subfigure}
     \begin{subfigure}[b]{0.33\textwidth}
      \centering
    \includegraphics[scale = 0.25]{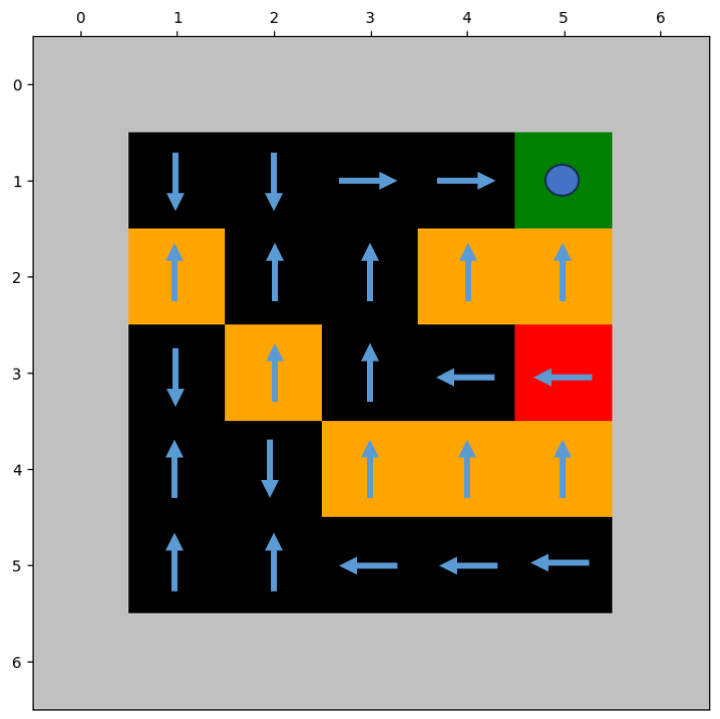}
    \caption{identified policy when h = 5}
    \end{subfigure}
     \begin{subfigure}[b]{0.33\textwidth}
      \centering
    \includegraphics[scale = 0.25]{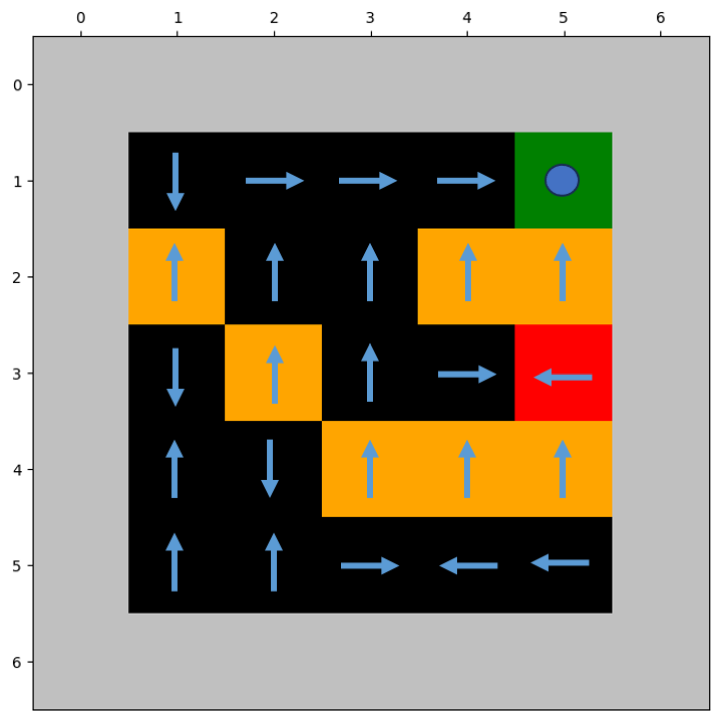}
    \caption{identified policy when h = 6}
    \label{fig:policy_6}
    \end{subfigure}
\caption{The policy identified by PRI in the Grid World}
\end{figure}
\end{document}